\newif\ifJOURNAL
\newif\ifCONF
\newif\ifarXiv
\newif\ifWP
\newif\ifFULL

\arXivtrue

\ifarXiv
  \documentclass[10pt]{article}
  \usepackage{amsmath,amsthm,amsfonts,amssymb,latexsym,graphicx}
\fi

\usepackage{etoolbox,stmaryrd,cite,enumitem} 
\usepackage{booktabs}  
\usepackage[shortcuts]{extdash} 

\emergencystretch=5mm
\tolerance=400
\allowdisplaybreaks[4]

\newtoggle{JOURNAL}
\newtoggle{CONF}
\newtoggle{arXiv}
\newtoggle{WP}
\newtoggle{FULL}
\newtoggle{TR}
\newtoggle{false} 
\newtoggle{true}\toggletrue{true} 

\ifJOURNAL\toggletrue{JOURNAL}\fi
\ifCONF\toggletrue{CONF}\fi
\ifarXiv\toggletrue{arXiv}\toggletrue{TR}\fi
\ifWP\toggletrue{WP}\toggletrue{TR}\fi
\ifFULL\toggletrue{FULL}\fi

\usepackage{environ}
\NewEnviron{wideformula}{%
  \iftoggle{CONF}{\begin{equation}\BODY\end{equation}}%
     {\begin{multline}\BODY\end{multline}}}
\NewEnviron{wideformula*}{%
  \iftoggle{CONF}{\begin{equation*}\BODY\end{equation*}}%
    {\begin{multline*}\BODY\end{multline*}}}
\newcommand{\NarrowLineBreak}{\iftoggle{CONF}{}{\\}}
\NewEnviron{widish}{\iftoggle{CONF}{$\BODY$ }{\[\BODY\]}}
\NewEnviron{widish.}{\iftoggle{CONF}{$\BODY$. }{\[\BODY.\]}}

\NewDocumentEnvironment{myproof}{o}
  {%
    \IfNoValueTF{#1}
      {\begin{proof}}
      {%
        \iftoggle{TR}
          {\begin{proof}[Proof {#1}]}
          {\medskip\begin{proof}\textbf{#1} }%
      }%
  }
  {\end{proof}}

\iftoggle{TR}{%
  \usepackage[authoryear]{natbib}
  \bibpunct{(}{)}{;}{a}{,}{,} 
  \usepackage[pdfpagemode=UseNone,pdfstartview=FitH]{hyperref}  
}{}%

\iftoggle{TR}{%
  \newtheorem{theorem}{Theorem}
  \newtheorem{proposition}[theorem]{Proposition}
  
  \newtheorem{corollary}[theorem]{Corollary}

  \theoremstyle{definition}
  
  \newtheorem{example}[theorem]{Example}

  \theoremstyle{remark}
  \newtheorem{remark}[theorem]{Remark}
}{}%

\newcommand{\N}{\mathbb{N}}   
\newcommand{\R}{\mathbb{R}}   


\newcommand{\e}{\mathrm{e}}    



\DeclareMathOperator{\URP}{\mathbb{P}^{\mathrm{R}}}  

\newcommand{\IRP}{\mathrm{IRP}}
\newcommand{\Sym}{\mathrm{Sym}}

\title{Inductive randomness predictors: beyond conformal}

\iftoggle{TR}{%
  \author{Vladimir Vovk}%
}{}%

\begin{document}
\maketitle
\begin{abstract}%
  This paper introduces inductive randomness predictors,
  which form a proper superset of inductive conformal predictors
  but have the same principal property of validity
  under the assumption of randomness (i.e., of IID data).
  It turns out that every non-trivial inductive conformal predictor
  is strictly dominated by an inductive randomness predictor,
  although the improvement is not great,
  at most a factor of $\e\approx2.72$ in the case of e-prediction.
  The dominating inductive randomness predictors are more complicated
  and more difficult to compute;
  besides, an improvement by a factor of $\e$ is rare.
  Therefore, this paper does not suggest replacing inductive conformal predictors
  by inductive randomness predictors
  and only calls for a more detailed study of the latter.
  \iftoggle{arXiv}{%

     The version of this paper at \url{http://alrw.net} (Working Paper 44)
     is updated most often.%
  }{}%
\end{abstract}

\section{Introduction}
\label{sec:introduction}

The assumption of randomness
(i.e., the observations being IID, independent and identically distributed)
is the fundamental one in mainstream machine learning.
Conformal predictors, in their basic form considered in this paper,
are guaranteed to satisfy a property of validity under randomness,
but they do not require randomness in order to be valid:
e.g., they remain valid if the assumption of randomness is weakened
to that of exchangeability.
A natural question is whether we can improve on conformal predictors
by using the assumption of randomness more fully.
The purpose of this paper is to show that some improvement is possible,
although it is unclear whether the improvement can be usefully exploited
in practice.

The first paper exploring this question was \citet{Nouretdinov/etal:2003ALT},
whose conclusion was that only a limited improvement is possible.
However, the setting of \citet{Nouretdinov/etal:2003ALT}
was the algorithmic theory of randomness,
and so their results involved unspecified constants.
A limited improvement might consist in improving conformal p-values
by a constant factor,
which ceases to be limited in practice if the factor is large enough.

Results developing those of \citet{Nouretdinov/etal:2003ALT}
and not involving unspecified constants were obtained in \citet{Vovk:arXiv2502}.
The latter paper introduced ``randomness predictors'',
the most general predictors enjoying the same property of validity
as conformal predictors under the assumption of randomness.
In fact, the definition of randomness predictors is trivial:
it is just a straightforward application of the definition of p-values.
Similarly to conformal e-predictors \citep{Vovk:2025PR},
the paper \citet{Vovk:arXiv2502} also introduced ``randomness e-predictors'',
based on e-values instead of p-values.
Since both conformal and randomness predictors are valid under the assumption of randomness,
the main advantage of randomness prediction, if real,
may lie in its \emph{efficiency}, which is defined, informally,
as the smallness of the p-values, or largeness of e-values,
that it produces for false labels.

Results of \citet{Vovk:arXiv2502} (see, e.g., Theorems 5 and 6) say, roughly,
that each randomness e-predictor can be turned
into a conformal e-predictor that loses in efficiency by at most a factor of $\e$
(the base of natural logarithms, $\e\approx2.72$):
the e-values for the false labels may go up at most $\e$-fold on average.
In practical machine learning and statistics,
an $\e$-fold improvement might be valuable.
A crude relation between e-values and p-values
is that an e-value of $e$ corresponds to a p-value of $1/e$
\citep[Remark 2.3]{Vovk/Wang:2021}.
This suggests that an improvement by a factor of $\e$
of conformal p-values might also be possible.
In this paper we will investigate whether and when
we can achieve it in reality.
The answer is that we can but, under our current definitions, not often.

The most popular kind of conformal predictors is inductive conformal predictors (ICPs).
They split the training set of size $n$ into two parts:
a proper training set of size $l$ and a calibration set of size $m$,
where $l+m=n$.
The main advantage of ICPs is that they can be used on top of generic point predictors
(such as neural networks)
without prohibitive computational costs,
whereas full conformal prediction is computationally efficient only
on top of a relatively narrow class of point predictors.
This paper introduces and studies inductive randomness predictors (IRPs),
which are also computationally efficient
(provided results of some preliminary computations,
which only depend on the size $m$ of the calibration set rather than the actual data,
are stored as a table).

A major limitation of conformal predictors,
discussed in detail in \citet{Vovk/etal:2009AOS},
is that the p-values that they output can never drop below $\frac{1}{n+1}$.
Correspondingly, the smallest p-value that can be achieved by an ICP
is $\frac{1}{m+1}$.
Let us call it the \emph{fundamental limitation of inductive conformal prediction}.
All specific IRPs considered in this paper break through this limitation:
they are capable of achieving p-values of $\frac{1}{\e(m+1)}$.
The factor of $\e$ is negligible by the standards of the algorithmic theory of randomness,
but substantial by the usual standards of machine learning and statistics.
(In principle,
it is easy to overcome the fundamental limitation of inductive conformal prediction
by using smoothed ICPs \citep[Sect.~4.2.1]{Vovk/etal:2022book},
but randomization is often considered problematic and avoided in practice.)

We will start the main part of the paper in Sect.~\ref{sec:definitions}
from the principal definitions, including that of IRPs.
Similarly to ICPs,
IRPs are defined using inductive nonconformity measures,
but now these take values in a ``summary space'' $\mathbf{S}\subseteq\R$.
The size of the summary space has important implications for the achievable randomness p-values
(i.e., p-values output by inductive randomness predictors).
In Sect.~\ref{sec:binary} we discuss binary IRPs,
corresponding to $\left|\mathbf{S}\right|=2$.
This case leads to the smallest randomness p-values,
but on the negative side binary inductive nonconformity measures may be crude,
which will be illustrated on two examples.

In Sect.~\ref{sec:inadmissible} we will see that the idea of binary IRPs
can be used to demonstrate the inadmissibility of ICPs,
where ``inadmissibility'' means, according to the standard usage in statistical decision theory,
that for each ICP there exists an IRP that is never worse and sometimes better
than that ICP.
The following section, Sect.~\ref{sec:SIRP},
shows that ICPs can be improved in a much stronger sense.
It allows the summary space $\mathbf{S}$
to be the real line or its infinite subset,
and it introduces a class of IRPs which we call ``separation IRPs''.
Informally, separation IRPs are obtained by combining inductive conformal prediction
with repeated application of binary inductive randomness prediction.
Each non-trivial ICP is strictly dominated by a corresponding IRP;
moreover,
in typical cases
a separation IRP based on an inductive nonconformity measure $A$ produces a p-value
that is almost surely better
than the p-value produced by the ICP based on $A$.
Besides, similarly to the binary IRPs,
the separation IRPs can still achieve randomness p-values of $\frac{1}{\e(m+1)}$
breaking the fundamental limitation.

However, separation IRPs still have a substantial limitation:
in general (without restrictions on the summary space $\mathbf{S}$),
they can achieve a p-value at best $K/(m+1)$
when the corresponding ICP achieves a p-value of $(K+1)/(m+1)$;
therefore, there can be a significant improvement
only for very small values of $K$, first of all for $K=0$.
This motivates the further study of discrete IRPs,
for which $\left|\mathbf{S}\right|<\infty$,
in the following two sections.
In the ternary case $\left|\mathbf{S}\right|=3$,
which is the topic of Sect.~\ref{sec:ternary},
we have finer inductive nonconformity measures than in the binary case,
and the ternary case might be a better contender to be useful in practice.
Section~\ref{sec:discrete} extends these considerations
to an arbitrary finite summary space $\mathbf{S}$.
The quaternary case $\left|\mathbf{S}\right|=4$
might be another practically useful one,
along the lines of \citet[Figure~1.5]{Vovk/etal:2022book}.

The short Sect.~\ref{sec:conclusion} concludes.
The proofs are relegated to Appendix~\ref{app:proofs}.

Let $\N_0:=\{0,1,\dots\}$ and $\N_1:=\{1,2,\dots\}$ be the two standard sets of natural numbers.

\section{Inductive conformal and randomness predictors}
\label{sec:definitions}

We consider the problem of batch prediction.
Given a training sequence $z_1,\dots,z_n$ of a given length $n$,
where $z_i=(x_i,y_i)$ (an \emph{example})
consists of an \emph{object} $x_i\in\mathbf{X}$
and a \emph{label} $y_i\in\mathbf{Y}$,
and also given a test object $x_{n+1}\in\mathbf{X}$,
our task is to predict the label $y_{n+1}$ of $x_{n+1}$.
The object space $\mathbf{X}$ and the label space $\mathbf{Y}$
are non-empty measurable spaces.
To exclude trivialities, let us assume that $n\ge2$
and that the $\sigma$-algebra on $\mathbf{Y}$
is different from $\{\emptyset,\mathbf{Y}\}$
(i.e., that $\mathbf{Y}$ contains at least two essentially distinct elements).

In the definition of an ICP
we will follow~\citet[Sect.~4.2.2]{Vovk/etal:2022book}.
The training sequence $z_1,\dots,z_n$ is split into two parts:
the \emph{proper training sequence} $z_1,\dots,z_l$ of length $l$
and the \emph{calibration sequence} $z_{l+1},\dots,z_n$ of length $m:=n-l$;
we will assume $l\in\N_1$ and $m\in\N_1$.
An \emph{inductive nonconformity measure} is a measurable function
$A:\mathbf{Z}^{l+1}\to\R$,
where $\mathbf{Z}:=\mathbf{X}\times\mathbf{Y}$ is the \emph{example space}.
The \emph{inductive conformal predictor} (ICP) based on $A$
outputs the prediction p-function
\begin{equation*} 
  f(y)
  :=
  \frac
    {\left|\left\{j=l+1,\dots,n+1\mid\alpha_j\ge\alpha_{n+1}\right\}\right|}
    {m+1}
  \in
  \left[
    \frac{1}{m+1},
    1
  \right],
  \quad
  y\in\mathbf{Y},
\end{equation*}
where the $\alpha$s are defined by
\begin{align}
  \alpha_j &:= A(z_1,\dots,z_l,z_j), \quad j=l+1,\dots,n,
  \iftoggle{FULL}{\label{eq:alpha-1}}{\notag}\\
  \alpha_{n+1} &:= A(z_1,\dots,z_l,(x_{n+1},y)).
  \iftoggle{FULL}{\label{eq:alpha-2}}{\notag}
\end{align}
We often refer to the values $\alpha$ taken by an inductive nonconformity measure
as \emph{nonconformity scores}.
There are different ways of packaging predictions output by ICPs
(such as prediction sets, briefly discussed after introducing IRPs below).

To define and discuss IRPs,
we will need several auxiliary notions.
The \emph{upper randomness probability} of a measurable set $E\subseteq\mathbf{Z}^{n+1}$
is defined in \citet[Sect.~9.1.1]{Vovk/etal:2022book} as
\begin{equation}\label{eq:URP}
  \URP(E)
  :=
  \sup_{Q\in\mathfrak{P}(\mathbf{Z})}
  Q^{n+1}(E),
\end{equation}
where we use the notation $\mathfrak{P}(Z)$
for the set of all probability measures on a measurable set $Z$.
A \emph{randomness p-variable} on $\mathbf{Z}^{n+1}$
is a measurable function $P:\mathbf{Z}^{n+1}\to[0,1]$
satisfying
\begin{equation}\label{eq:p-variable}
  \forall\epsilon\in(0,1):
  \URP(\{P\le\epsilon\})
  \le
  \epsilon.
\end{equation}
A \emph{randomness p-predictor}, as defined in \citet{Vovk:arXiv2502},
is the same thing as a randomness p-variable.
We will see that this terminology is justified
after the definition of the IRPs,
which are a subclass of randomness p-predictors,
below.

Very slightly generalizing the notion used when defining ICPs,
an \emph{inductive nonconformity measure} used for IRPs
is a measurable function
$A:\mathbf{Z}^{l+1}\to\mathbf{S}$,
where $\mathbf{S}$ is a measurable space
which we will call the \emph{summary space}.
We will assume that $\mathbf{S}\subseteq\R$
and that $\mathbf{S}$ inherits
the structures of measurable, topological, and linearly ordered space from $\R$
(so that it can be argued that the new definition is not a generalization at all).
Similarly to \eqref{eq:URP}, we define the upper randomness probability
of a measurable set $E\subseteq\mathbf{S}^{m+1}$ as
\begin{equation*}
  \URP(E)
  :=
  \sup_{Q\in\mathfrak{P}(\mathbf{S})}
  Q^{m+1}(E).
\end{equation*}
(Therefore, the notation $\URP$ is overloaded,
but it should never lead to confusion in this paper.)
An \emph{aggregating p-variable} $P:\mathbf{S}^{m+1}\to[0,1]$
is defined to be a randomness p-variable on $\mathbf{S}^{m+1}$,
meaning that it is required to satisfy \eqref{eq:p-variable}.

In inductive randomness prediction,
the training sequence $z_1,\dots,z_n$ is still split
into the proper training sequence $z_1,\dots,z_l$
and the calibration sequence $z_{l+1},\dots,z_n$.
The \emph{inductive randomness predictor} (IRP) based on
(sometimes we will say ``corresponding to'')
an inductive nonconformity measure $A$ and an aggregating p-variable $P$
is defined to be the randomness p-predictor
\begin{equation*} 
  \IRP_{A,P}(z_1,\dots,z_{n+1})
  :=
  P(\alpha_{l+1},\dots,\alpha_{n+1}),
\end{equation*}
where
\begin{equation}
  \iftoggle{FULL}{\label{eq:alpha}}{\notag}
  \alpha_j := A(z_1,\dots,z_l,z_j), \quad j=l+1,\dots,n+1.
\end{equation}
Given a training sequence $z_1,\dots,z_n$ and a test object $x_{n+1}$,
the IRP $\IRP_{A,P}$ outputs the prediction p-function
\begin{equation}\label{eq:f}
  f(y)
  =
  f(y;z_1,\dots,z_n,x_{n+1})
  :=
  \IRP_{A,P}(z_1,\dots,z_n,(x_{n+1},y)),
  \quad
  y\in\mathbf{Y}.
\end{equation}
This function itself can be considered to be the IRP's prediction
for $y_{n+1}$.
Alternatively, we can choose a \emph{significance level} $\epsilon>0$
(i.e., our target probability of error)
and output the prediction set
\begin{equation}\label{eq:Gamma}
  \Gamma^{\epsilon}
  :=
  \left\{
    y\in\mathbf{Y}
    \mid
    f(y)>\epsilon
  \right\}
\end{equation}
as our prediction for $y_{n+1}$.
By the definition of a randomness p-variable,
the probability of error (meaning $y_{n+1}\notin\Gamma^{\epsilon}$)
will not exceed $\epsilon$ under the assumption of randomness.

We will only be interested in IRPs
for which their underlying aggregating p-variable
is \emph{calibration-invariant},
i.e., does not depend on the ordering of its first $m$ arguments
(corresponding to the calibration examples),
so we make this requirement part of the definition.
This makes the IRPs themselves independent of the ordering of the calibration examples.

\begin{remark}
  \upshape
  In our analysis of IRPs,
  we will assume that all $n+1$ examples under consideration are IID,
  although it will be obvious that it is sufficient to assume
  that only the calibration and test examples are IID.
\end{remark}

ICPs are a special case of IRPs;
for them, $\mathbf{S}=\R$, and they are based on the aggregating p-variable
\begin{wideformula}
  \iftoggle{FULL}{\label{eq:Pi}}{\notag}
  \Pi(\alpha_{l+1},\dots,\alpha_{n+1})
  :=
  \frac{\left|\left\{j=l+1,\dots,n+1\mid\alpha_j\ge\alpha_{n+1}\right\}\right|}{m+1},
  \NarrowLineBreak
  \enspace (\alpha_{l+1},\dots,\alpha_{n+1})\in\mathbf{S}^{m+1}.
\end{wideformula}
Therefore, we will use the notation $\IRP_{A,\Pi}$ for the ICP
based on an inductive nonconformity measure $A$.

In statistical hypothesis testing
(see, e.g., \citealt[Sect.~3.2]{Cox/Hinkley:1974})
it is customary to define p-variables via ``test statistics'',
$B:\mathbf{S}^{m+1}\to\R$ in our current context.
This notion is insufficient in conformal prediction
\citep{Gurevich/Vovk:2019COPA},
where we need to replace $\R$ by a general linearly ordered measurable space
with all initial segments $(-\infty,r]$ measurable.
Let us call functions $B$ of this kind \emph{nonconformity statistics}.
\iftoggle{FULL}{\bluebegin
  Replacing ``test statistic'' by ``nonconformity statistic''
  serves a double role:
  emphasizes greater generality;
  makes it easy to remember that large values of the statistics are significant
  (otherwise we say ``conformity statistic'').
\blueend}{}%
Such a function defines the aggregating p-variable
\begin{wideformula}\label{eq:P_B}
  P_B(\alpha_{l+1},\dots,\alpha_{n+1})
  :=
  \URP
  \left(\left\{
    B
    \ge
    B(\alpha_{l+1},\dots,\alpha_n,\alpha_{n+1})
  \right\}\right),\NarrowLineBreak
  \enspace (\alpha_{l+1},\dots,\alpha_{n+1})\in\mathbf{S}^{m+1}.
\end{wideformula}
(Intuitively, large values of $B$ indicate nonconformity.
We replace the ``$\ge$'' in \eqref{eq:P_B} by ``$\le$''
when $B$ is referred to as a ``conformity statistic''.)
This aggregating p\-/variable can then be used as an input to an IRP,
and then we might say that this IRP,
$\IRP_{A,P_B}$,
is based on $A$ (an inductive nonconformity measure) and $B$.

\section{Binary inductive randomness predictors}
\label{sec:binary}

In this section we will concentrate on \emph{binary IRPs},
for which the summary space is $\mathbf{S}:=\{0,1\}$.
Intuitively, a summary of 0 means conformity,
and 1 means lack of conformity.
Binary IRPs are simple and even crude;
however, they are able to output smaller p-values
than other IRPs considered in this paper.

Binary IRPs will output prediction p-functions of an especially simple kind.
Namely, for them the prediction function \eqref{eq:f} will be a \emph{hedged prediction set}
in the sense of having the form
\begin{equation}\label{eq:hedged}
  f(y)
  =
  \begin{cases}
    c' & \text{if $y\in E$}\\
    c & \text{otherwise}
  \end{cases}
\end{equation}
for some $E\subseteq\mathbf{Y}$ and $c,c'\in[0,1]$ with $c\le c'$
(typically $c<c'$, so that $f$ identifies $E$ uniquely).
We call $E$ the \emph{prediction set} associated with $f$,
$c$ is the \emph{unconfidence} in $E$ (and $1-c$ is the \emph{confidence}),
and $c'$ is the \emph{credibility} of the prediction $f$.
(These terms are used similarly to conformal prediction,
as in \citealt[Sections 3.1.2 and 3.5.1]{Vovk/etal:2022book}.)
We will be mainly interested in $E$ and $c$;
$c$ reflects our confidence in the prediction set $E$;
the smaller $c$ the greater our confidence.
As always, the expression ``prediction interval'' will be applied
to prediction sets that happen to be intervals of the real line,
and the corresponding hedged prediction sets will be called hedged prediction intervals.

The nonconformity statistic $B$ used by binary IRPs is
\begin{equation}\label{eq:B-binary}
  B(\alpha_{l+1},\dots,\alpha_n,\alpha_{n+1})
  :=
  \left(
    \alpha_{n+1},
    -\sum_{i=l+1}^n\alpha_i
  \right);
\end{equation}
it takes values in $\R^2$ equipped with the lexicographic order.
Remember that $(\alpha,\beta)\le(\alpha',\beta')$ in the lexicographic order
means that either $\alpha<\alpha'$ or $\alpha=\alpha'$ and $\beta\le\beta'$.
This gives us a linear order (so that every two elements of $\R^2$ are comparable).

Our definition \eqref{eq:B-binary} is the most natural choice
for the nonconformity statistic:
the nonconformity of a sequence $(\alpha_{l+1},\dots,\alpha_{n+1})$
of nonconformity scores is determined
by the nonconformity score $\alpha_{n+1}$ of the test example
except that ties are broken by the total nonconformity score
for the calibration sequence.
We will get an equivalent definition
if we replace the $\sum$ in \eqref{eq:B-binary}
by any other symmetric function that is strictly increasing
in each of its arguments;
remember that we are only interested in calibration-invariant IRPs.
The most nonconforming $(\alpha_{l+1},\dots,\alpha_{n+1})$
correspond to nonconforming $\alpha_{n+1}$
and conforming $(\alpha_{l+1},\dots,\alpha_{n})$.

Once we fix the nonconformity statistic \eqref{eq:B-binary},
a binary IRP is determined by its inductive nonconformity measure.
Informally, there are two kinds of binary inductive nonconformity measures,
which we will call intrinsic and extrinsic.
Extrinsic ones are obtained from non-binary (usually continuous)
inductive nonconformity measures by thresholding;
the nonconformity scores above the threshold are replaced by 1,
while those below are replaced by 0.
Intrinsic ones are defined in some other natural way
without need for thresholding at the last step.
There might be intermediate cases,
e.g., where the binary inductive nonconformity measure
uses thresholding, but the threshold is defined in a natural way.
Let me give two examples of binary inductive nonconformity measures
illustrating the two kinds.
(Two more examples of inductive nonconformity measures will be given
at the end of this section and, as Example~\ref{ex:standard},
in Sect.~\ref{sec:SIRP}.)

\begin{example}\label{ex:decision}
  \upshape
  Train a decision tree $\hat g$ on the proper training set
  in such a way that,
  when applied to an example $z\in\mathbf{Z}$,
  $\hat g$ outputs either 0 or 1,
  where 1 is intended to be an indication of the strangeness of $z$
  as compared with the examples in the proper training set.
  The nonconformity score of a calibration or test example $z$ is then $\hat g(z)$
  (where the test example is the test object plus a postulated label $y$).
  This is an example of an intrinsic binary nonconformity measure.
  We can use \eqref{eq:B-binary} as nonconformity statistic.
  The hedged prediction set for a test object $x_{n+1}$
  will be the set of all labels $y$ such that
  the nonconformity score of $(x_{n+1},y)$ is 0;
  its unconfidence will be given by Proposition~\ref{prop:binary} below
  (and discussed after the proposition).
\end{example}

\begin{example}\label{ex:SVM}
  \upshape
  Now set $\mathbf{Y}:=\{-1,1\}$,
  so that here we are interested in binary classification.
  To define the nonconformity score $A(z_1,\dots,z_l,(x,y))$,
  use the support vector machine (SVM) to find
  the optimal separating hyperplane and its margin
  from $z_1,\dots,z_l$ as training sequence.
  Set the nonconformity score $A(z_1,\dots,z_l,(x,y))$ to 1
  if $x$ is classified incorrectly (namely, as $-y$)
  by the optimal separating hyperplane
  and $x$ is outside the margin;
  set $A(z_1,\dots,z_l,(x,y))$ to 0 otherwise.
  A reasonable definition of $B$ is still \eqref{eq:B-binary}.
  The prediction set produced by this IRP for a test object $x_{n+1}$
  will be $\{\hat y\}$ if $x_{n+1}$ is outside the margin,
  where $\hat y$ is the SVM's prediction for the label of $x_{n+1}$.
  Otherwise (if $x_{n+1}$ is inside the margin),
  the prediction set will be vacuous, $\{-1,1\}$.
  The unconfidence of this prediction set will again be given
  in Proposition~\ref{prop:binary} below.
  This definition may be regarded as extrinsic,
  although we may argue that the threshold is natural.

  An alternative definition would be to set $A(z_1,\dots,z_{l},(x,y))$ to 1
  if $x$ is a support vector for the SVM
  constructed from $(z_1,\dots,z_{l},(x,y))$ as training sequence
  and to set it to 0 otherwise,
  as in \citet[Sect.~2]{Gammerman/etal:1998}
  (that paper uses ``incertitude'' for our ``unconfidence'').
  However, the computational cost of such an IRP would be prohibitive,
  since it would require constructing a new SVM
  for each test object and each possible label for it.
  This alternative definition would be intrinsic.
\end{example}

Binary IRPs,
including the IRPs described in Examples~\ref{ex:decision} and~\ref{ex:SVM},
output prediction sets that do not depend on the calibration sequence.
This makes them inflexible as compared with typical conformal predictors,
but on the positive side they can achieve very low unconfidences.

Let us now derive an expression (not quite explicit)
for the p-values output
by binary IRPs based on the nonconformity statistic \eqref{eq:B-binary}.
The following proposition,
to be proved in Sect.~\ref{app:binary}, gives the expression,
and after its statement we will discuss ways of using it.

\begin{proposition}\label{prop:binary}
  Suppose that a binary sequence $\alpha_{l+1},\dots,\alpha_{n}$
  contains $K$ 1s
  and that $\alpha_{n+1}=1$.
  Then the nonconformity statistic $B$ defined by \eqref{eq:B-binary}
  leads to a p-value $P_B(\alpha_{l+1},\dots,\alpha_{n+1})$ of
  \begin{equation}\label{eq:main}
    \IRP^1(m,K)
    :=
    \max_{p\in[0,1]}
    \sum_{k=0}^K
    \binom{m}{k}
    p^{k+1}
    (1-p)^{m-k}.
  \end{equation}
  Let $m\to\infty$.
  \begin{itemize}
  \item
    For $K=0$, the p-value is
    \begin{equation}\label{eq:K-0}
      \IRP^1(m,0)
      =
      \frac{m^m}{(m+1)^{m+1}}
      \sim
      \frac{\exp(-1)}{m}
      \approx
      \frac{0.37}{m}
    \end{equation}
    (and we can replace ``$\sim$'' by ``$\le$'').
  \item
    For $K=1$, the p-value is asymptotically equivalent
    to $c/m$, where $c\in[0.83,0.84]$.
  \item
    For $K=2$, the p-value is asymptotically equivalent to $c/m$,
    where $c\in[1.37,1.38]$.
  \item
    For $K=3$, the p-value is asymptotically equivalent to $c/m$,
    where $c\in[1.94,1.95]$.
  \end{itemize}
\end{proposition}

The upper index 1 in our notation $\IRP^1$ used for the p-values output by binary IRPs
(as in \eqref{eq:main})
refers to $L:=\left|\mathbf{S}\right|-1$,
the number of boundaries between adjacent summaries (0 and 1 in the binary case).

In the context of Example~\ref{ex:decision},
let us consider a decision tree that outputs 1 (signifying lack of conformity) only rarely,
so that we can expect that $K=0$.
In this case the prediction set output
by the IRP based on this decision tree
will be more confident than the identical prediction set
output by the ICP based on the same inductive nonconformity measure:
the unconfidence of the former will be approximately $0.37/m$ for large $m$,
whereas the unconfidence of the latter will be approximately $1/m$
(the precise value being $1/(m+1)$).

Even if $K=1$, the unconfidence for the IRP
is still close to $0.84/m$,
which is better than the smallest p-value that can be achieved
by any ICP on any training sequence.

In the context of Example~\ref{ex:SVM},
the definition of the nonconformity measure $A$ was chosen
so that $K$ can be expected to be small.
In this case the unconfidence of the IRP
will be significantly better than the unconfidence of the ICP
based on the same inductive nonconformity measure.

\begin{table}
  \caption{The p-values (in $\%$) for binary ICP and IRP for $m=19$ and the first few values of $K$,
    as described in text.}
  \begin{center}
  \begin{tabular}{lcccccccc}
    $K$        & 0     & 1     & 2     & 3     & 4     & 5     & 6     & 7 \\
    \midrule
    ICP        & 5     & 10    & 15    & 20    & 25    & 30    & 35    & 40 \\
    $\IRP^1$   & 1.89  & 4.35  & 7.18  & 10.26 & 13.57 & 17.06 & 20.72 & 24.55 
  \end{tabular}
  \end{center}
  \label{tab:binary}
\end{table}

Table~\ref{tab:binary} gives the unconfidences produced by the ICP and IRP
that are based on the same binary inductive nonconformity measure
and with the IRP based on the nonconformity statistic \eqref{eq:B-binary}.
We take $m:=19$, in order for an ICP to be able to achieve
a statistically significant p-value of $5\%$.
The first two entries in the table,
$1.89\%$ and $4.35\%$,
are less than $5\%$
and so overcome the fundamental limitation of inductive conformal prediction.
If the binary nonconformity measure is intrinsic,
as in Example~\ref{ex:decision} or in the alternative definition in Example~\ref{ex:SVM},
the binary IRP has an obvious significant advantage
over the corresponding ICP, which we may also call binary.
If it extrinsic, comparison is more difficult
since it is more natural to consider the conformal p-values
produced by the unthresholded nonconformity measure.

\begin{table}
  \caption{The asymptotic numerators of the unconfidences for binary ICP and IRP
    for various values of $K$, as described in text.}
  \begin{center}
  \begin{tabular}{lcccccccc}
    $K$      & 0     & 1     & 2     & 3     & 4     & 5     & 6     & 7 \\
    \midrule
    ICP      & 1     & 2     & 3     & 4     & 5     & 6     & 7     & 8 \\
    $\IRP^1  $ & 0.368 & 0.840 & 1.371 & 1.942 & 2.544 & 3.168 & 3.812 & 4.472 \\
    ratio    & 0.368 & 0.420 & 0.457 & 0.486 & 0.509 & 0.528 & 0.545 & 0.559
  \end{tabular}
  \end{center}
  \label{tab:binary-as}
\end{table}

Table~\ref{tab:binary-as} is an asymptotic version of Table~\ref{tab:binary}.
It gives the numerators of asymptotic expressions
such as those in \eqref{eq:K-0} and in the rest of the itemized list
in Proposition~\ref{prop:binary},
with better accuracy and for a wider range of $K$.
The row labelled ``$\IRP^1$'' gives the numerator itself,
and the row labelled ``ratio'' gives the ratio
of the numerator for the IRP to the numerator for the ICP.
Namely,
the asymptotic unconfidence for the hedged prediction set output by the binary IRP is $a_K/m$,
where $a_K$ is given in row ``$\IRP^1$'',
and the asymptotic unconfidence for the ICP is $(K+1)/m$,
with the numerator $K+1$ given in row ``ICP''.
Row ``ratio'' reports $a_K/(K+1)$ showing by how much $a_K/m$ is smaller.
We can see that the ratio is substantially less than 1 even for $K=7$,
in which case we have $4.472/m$ for the IRP (approximately) and $8/m$ for the ICP;
the growth of the ratio quickly slows down as $K$ increases.

\subsection*{A specific extrinsic binary IRP}

As mentioned earlier,
Table~\ref{tab:binary} directly shows the advantage of IRPs over ICPs
for intrinsic binary inductive nonconformity measures.
In practice, however, we are more likely to encounter
a continuous nonconformity measure,
as in the main definition in Example~\ref{ex:SVM}.
We can still make such a nonconformity measure binary
by thresholding, but we might lose something in the process;
in the context of Example~\ref{ex:SVM} we might be better off
directly using the ICP based on the signed distance from the optimal separating hyperplane
divided by the margin
(the sign being $+$ when the object is
on the wrong side of the hyperplane)
as nonconformity measure.
Let's see how much we can lose in a much simpler situation.

\begin{figure}[bt]
  \begin{center}
    \includegraphics[width=0.6\textwidth]{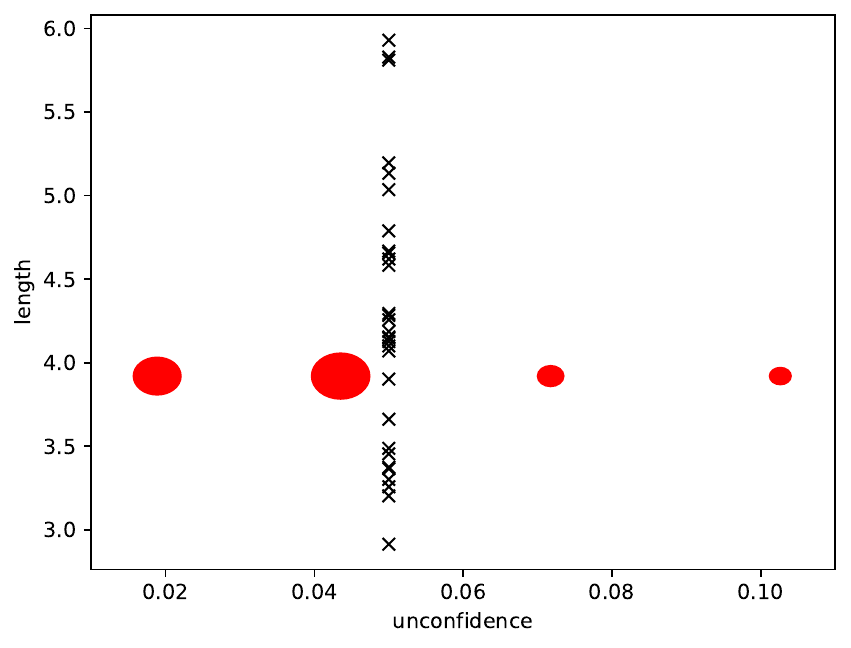}
  \end{center}
  \caption{The binary IRP (red circles) and ICP (black crosses) as described in text}
  \label{fig:binary}
\end{figure}

Figure~\ref{fig:binary} shows some statistics
for predictions output by an ICP and a binary IRP
in an idealized situation with $m=19$.
Suppose that, based on the proper training set,
we can define a ``residual'' $r_i$ for each calibration or test example
that we model as distributed as $r_i\sim N(0,1)$,
and then the corresponding nonconformity score can be defined
as the absolute value of the residual.
For example, the residual can be defined as the difference
$r_i:=y_i-\hat y_i$
between the true and predicted labels
(where the predicted label is based on the proper training set),
and then the nonconformity score is $\alpha_i:=\left|r_i\right|$.
Alternatively, we can define $r_i:=(y_i-\hat y_i)/\hat\sigma_i$,
where $\hat\sigma_i$ is the predicted accuracy of $\hat y_i$,
with the same definition of $\alpha_i$ via $r_i$.
Additionally, we can either transform $(y_i-\hat y_i)/\hat\sigma_i$
monotonically to make its distribution closer to $N(0,1)$
or replace $N(0,1)$ by a different distribution.

To make our inductive nonconformity measure binary,
let us set $U:=\Phi^{-1}(0.975)$
(the $97.5\%$ quantile of the standard Gaussian distribution $N(0,1)$,
whose CDF is denoted by $\Phi$)
and define a new, binary, inductive nonconformity measure
with nonconformity scores $\alpha'_i:=1_{\{\alpha_i\ge U\}}$
(so that $\alpha'_i=1$ with probability $5\%$).
We randomly generate 30 sets of $m$ residuals $r_i\sim N(0,1)$
(representing 30 datasets of size $m=19$)
and compute the corresponding prediction intervals using the ICP and IRP.

Figure~\ref{fig:binary} shows the lengths (see below) of the prediction intervals
output by the ICP as black crosses
and the unconfidences of the prediction intervals output by the binary IRP
as red circles.
The length of a prediction interval is measured ``in the $\alpha$-space'';
e.g., it is literally the length when $r_i:=y_i-\hat y_i$,
and it is measured in units of $\hat\sigma_i$
when $r_i:=(y_i-\hat y_i)/\hat\sigma_i$.
The centre of each black cross has the unconfidence $5\%$
of the corresponding prediction interval as its abscissa,
and the centre of each red circle has the length $2U$
of the corresponding prediction interval as its ordinate.
The area of each red circle is proportional to the number of simulations
(out of 30) that leads to that unconfidence
(found from Table \ref{tab:binary});
therefore, its radius is proportional to the square root of that number
(the precise numbers are:
10 simulations lead to $K=0$,
15 simulations to $K=1$,
3  simulations to $K=2$,
and 2 simulations to $K=3$).
The number of black crosses above the $2U$ level for the IRP is 20 (out of 30).

\begin{figure}[bt]
  \begin{center}
    \includegraphics[width=0.48\textwidth]{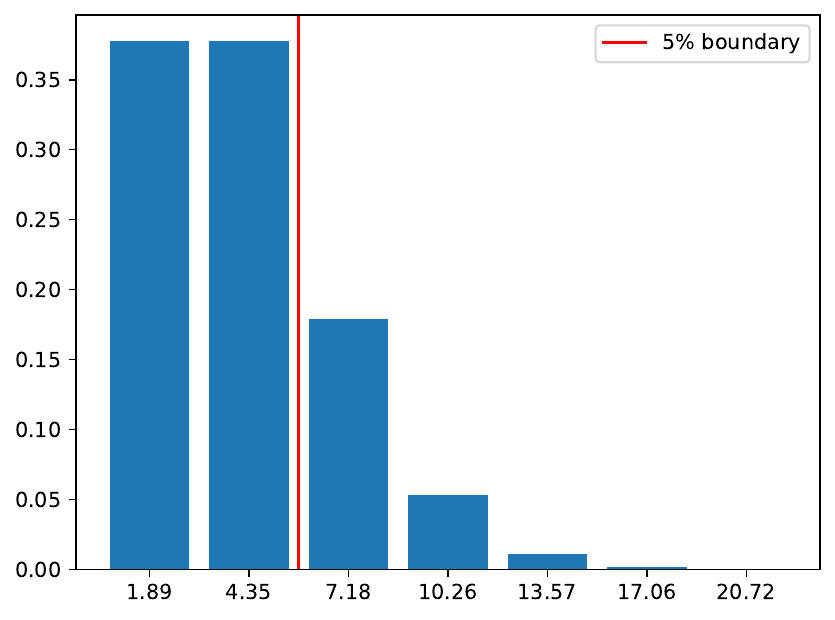}
    \includegraphics[width=0.48\textwidth]{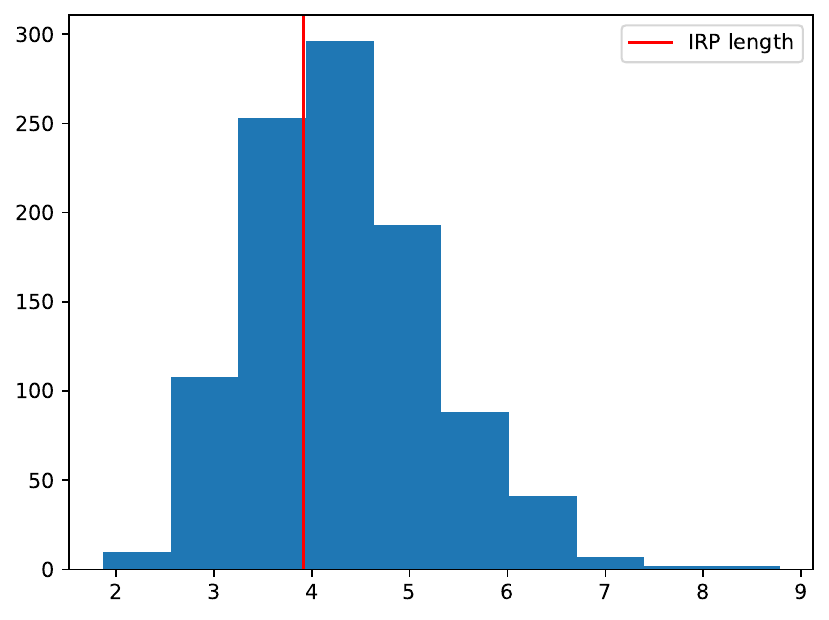}
  \end{center}
  \caption{Details for the same ICP and IRP and the same random data as in Figure~\ref{fig:binary}
    but for $\infty$ (left) or 1000 (right) simulations.
    Left panel: the bar chart for the unconfidences of the prediction intervals output by the IRP.
    Right panel: the histogram of the lengths of the prediction intervals output by the ICP.}
  \label{fig:binary-stats}
\end{figure}

Figure~\ref{fig:binary-stats} gives some statistics
corresponding to Figure~\ref{fig:binary},
but this time with at least 1000 simulations
(for the left panel we even report the exact probabilities,
namely
\[
  \binom{m}{k}
  0.05^k
  0.95^{m-k}
\]
for the probability of $K=k$).
The left (resp.\ right) panel shows that the IRP usually outputs
prediction intervals with better unconfidences (resp.\ lengths),
even for an extrinsic binary inductive nonconformity measure.
The left panel shows that the probability of the IRP
overcoming the fundamental limitation of inductive conformal prediction
is high (it is close to $0.755$).

\section{Inadmissibility of inductive conformal predictors}
\label{sec:inadmissible}

Let us say that an IRP $P_1$ \emph{dominates} an IRP $P_2$
if $P_1\le P_2$
(the p-value output by $P_1$ never exceeds
the p-value output by $P_2$ on the same data).
The domination is \emph{strict} if, in addition,
$P_1(z_1,\dots,z_{n+1})<P_2(z_1,\dots,z_{n+1})$
for some data sequence $z_1,\dots,z_{n+1}$.

An equivalent way to express domination of $P_2$ by $P_1$
is to say that, at each significance level,
the prediction set output by $P_1$ is a subset of
(intuitively, is at least as precise as)
the prediction set output by $P_2$.
Strict domination means that sometimes the prediction set output by $P_1$
is more precise.
An IRP (in particular, an ICP) is \emph{inadmissible}
if it is strictly dominated by another IRP.
This is a special case of the standard notion of inadmissibility
in statistics.

\begin{proposition}\label{prop:inadmissible}
  Any inductive conformal predictor is inadmissible.
\end{proposition}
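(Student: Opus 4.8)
The plan is to keep the nonconformity measure $A$ fixed and to strictly dominate the inductive conformal predictor $\Pi_A$ by shrinking its aggregating p-variable $\Pi$ at a single summary tuple. Since $P'_A(z_1,\dots,z_{n+1})=P'(\alpha_{l+1},\dots,\alpha_{n+1})$ and $\Pi_A(z_1,\dots,z_{n+1})=\Pi(\alpha_{l+1},\dots,\alpha_{n+1})$, it suffices to build an aggregating p-variable $P'\colon\mathbf{S}^{m+1}\to[0,1]$ with $P'\le\Pi$ pointwise and $P'<\Pi$ at one attainable tuple; here the summary space is $\mathbf{S}=\R$, as in the definition of an ICP. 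The resulting IRP $P'_A$ then dominates $\Pi_A$ everywhere and strictly somewhere.

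First I would dispose of the degenerate case. If for every proper training sequence the map $A(z_1,\dots,z_l,\cdot)$ is constant, then within any single prediction all summaries coincide and $\Pi_A\equiv1$; such a trivial ICP is dominated by any nontrivial IRP (for instance one based on a nonconstant nonconformity measure together with $\Pi$ itself), which exists because $\mathbf{Y}$ has at least two essentially distinct elements. Otherwise there is a proper training sequence $z_1,\dots,z_l$ and two examples $z',z''$ with $s_0:=A(z_1,\dots,z_l,z')<A(z_1,\dots,z_l,z'')=:s_1$, and it is in this nontrivial regime that the real argument takes place.

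The heart of the matter is the single summary tuple $w:=(s_0,\dots,s_0,s_1)\in\R^{m+1}$, at which the test summary is the unique strict maximum, so that $\Pi(w)=1/(m+1)$. Its upper randomness probability is exactly the $k=0$ Bernoulli optimum: maximizing $Q(\{s_0\})^m\,Q(\{s_1\})$ over all probability measures $Q$ gives $\URP(\{w\})=\max_{p}p(1-p)^m=m^m/(m+1)^{m+1}$, which by \eqref{eq:k-0} is strictly below $1/(m+1)$. I would then set $P':=\Pi$ everywhere except at $w$, where $P'(w):=m^m/(m+1)^{m+1}$. This is precisely the configuration where the $k=0$ case of Proposition~\ref{prop:binary} reveals the rank-based p-value to be wasteful, so the inadmissibility is inherited directly from the binary advantage established earlier.

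The step I expect to require the most care is checking that $P'$ remains a genuine aggregating p-variable, i.e.\ that $\URP(\{P'\le\epsilon\})\le\epsilon$ for every $\epsilon\in(0,1)$. This splits into three regimes: for $\epsilon<\URP(\{w\})$ the set $\{P'\le\epsilon\}$ is empty; for $\URP(\{w\})\le\epsilon<1/(m+1)$ it equals the singleton $\{w\}$, whose upper randomness probability is $\URP(\{w\})\le\epsilon$ by construction; and for $\epsilon\ge1/(m+1)$ it coincides with $\{\Pi\le\epsilon\}$, since $w$ already lies in that set, so the bound follows from the validity of $\Pi$. Granting this, $P'\le\Pi$ holds everywhere with strict inequality at $w$, and $w$ is attained by the data sequence $z_1,\dots,z_l,z',\dots,z',z''$ (with $m$ copies of $z'$ and test example $z''$); hence $P'_A$ strictly dominates $\Pi_A$, which proves the proposition.
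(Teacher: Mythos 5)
Your proof is correct and follows essentially the same route as the paper's: both replace the ICP's minimal p-value $1/(m+1)$ by the $k=0$ Bernoulli optimum $m^m/(m+1)^{m+1}$ on the configuration where the test summary strictly exceeds all calibration summaries, and both verify the p-variable property by splitting on $\epsilon$ relative to $1/(m+1)$. The only differences are minor: the paper modifies $\Pi$ on a threshold event $\{\alpha_{n+1}>a,\ \alpha_i<a \text{ for all } i\}$ rather than at a single tuple $w$, and your dichotomy (whether $A(z_1,\dots,z_l,\cdot)$ is constant for every proper training sequence) is in fact slightly more careful than the paper's (whether $A$ takes only one value), since it correctly handles the edge case where $A$ varies only with the proper training part.
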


The idea behind the proof of Proposition~\ref{prop:inadmissible}
given in Sect.~\ref{app:inadmissible}
is that we can improve any ICP $\IRP_{A,\Pi}$
by splitting the summary space $\mathbf{S}$ in two parts
and then boosting $\IRP_{A,\Pi}$ by combining it with the binary IRP
determined by those two parts.
In the next section we will see that ICPs are also inadmissible
in a much stronger sense.

\section{Separation inductive randomness predictors}
\label{sec:SIRP}

In this section we will adapt the idea of binary IRP
to the case of a general summary space $\mathbf{S}$
including that of an interval of the real line
(perhaps infinite in both directions, $\R$,
or in one direction, such as $[c,\infty)$ or $(-\infty,c]$ for some $c\in\R$).
We will consider various ways to split $\mathbf{S}$ in two parts
and thus reduce our prediction problem to a series of binary cases.
Our results will be easiest to interpret if the reader assumes
that $\mathbf{S}$ is an interval of the real line
and the distribution of nonconformity scores $A(z_1,\dots,z_l,Z)$
(where $Z\sim Q$ and $Q^{n+1}$ is the data-generating distribution)
is continuous;
in any case, this is what we will assume
in Proposition~\ref{prop:SIRP} and informal discussions.

A separation IRP is determined by an inductive nonconformity measure $A$
and a 2D array (\emph{threshold array}) $(c_{K,I})$,
$K\in\{0,\dots,m-1\}$ and $I\in\N_1$,
of real numbers in $\mathbf{S}$.
We are particularly interested in the case
where the threshold array $(c_{K,I})$ is dense for each $K\in\{0,\dots,m-1\}$:
for each pair $s_1,s_2\in\mathbf{S}$ such that $s_1<s_2$
there exists $I$ such that $s_1<c_{K,I}<s_2$.
Let us say that $c\in\mathbf{S}$ \emph{$K$-separates the test nonconformity score}
(from the calibration nonconformity scores)
if the test nonconformity score
and exactly $K$ calibration nonconformity scores are above $c$.
The \emph{separation aggregating p-variable} 
based on the threshold array $(c_{K,I})$
is defined to be the aggregating p-variable based on the conformity statistic
\begin{equation}\label{eq:B-sep}
  B(\alpha_{l+1},\dots,\alpha_{n+1})
  :=
  (K,I),
\end{equation}
where $K:=\left|\{i\in\{l+1,\dots,n\}\mid\alpha_i\ge\alpha_{n+1}\}\right|$
and $I$ is the smallest index such that
$c_{K,I}$ $K$-separates the test nonconformity score;
we set $I:=\infty$ if such an index does not exist
(it will usually exist if $K<m$ and the threshold array is dense).
As usual, the order on the possible pairs $(K,I)$ is lexicographic.
The IRP based on the inductive nonconformity measure $A$
and the separation aggregating p-variable based on the threshold array $(c_{K,I})$
will be said to be the \emph{ideal separation IRP based on $A$ and $(c_{K,I})$}.
Remember that the next proposition assumes that the summary space is an interval
and the distribution of nonconformity scores is continuous.

\begin{proposition}\label{prop:SIRP}
  The p-value output by the ideal separation IRP
  based on an inductive nonconformity measure $A$ and a threshold array $(c_{K,I})$
  is at most
  \begin{multline}\label{eq:SIRP}
    \IRP^{\infty}(m,K,I)
    :=
    \frac{K}{m+1}
    +
    \max_{(p_0,\dots,p_I)\in\Delta_I}\\
    \sum_{i=1}^I
    \sum_{k=0}^K
    \frac{m!}{(m-K)!(k+1)!(K-k)!}
    \biggl(
      \sum_{j=0}^{i-1}
      p_j
    \biggr)^{m-K}
    p_i^{k+1}
    \biggl(
      \sum_{j=i+1}^I
      p_j
    \biggr)^{K-k},\\
    K\in\{0,\dots,m-1\},
    \enspace
    I\in\N_1,
  \end{multline}
  where $K$ and $I$ are as defined in \eqref{eq:B-sep}
  and $\Delta_I$ is the standard $I$-simplex
  \[
    \Delta_I
    :=
    \left\{
      (p_0,\dots,p_I)\in[0,\infty)^{I+1}
      \mid
      p_0+\dots+p_I=1
    \right\}.
  \]
  If $I=\infty$,
  the expression in~\eqref{eq:SIRP} is understood to be
  $\IRP^{\infty}(m,K,I):=(K+1)/(m+1)$.
\end{proposition}

See Sect.~\ref{app:SIRP} for a proof of Proposition~\ref{prop:SIRP}.
Let us define a \emph{separation IRP} as predictor
that, under the assumptions of Proposition~\ref{prop:SIRP},
outputs $\IRP^{\infty}(m,K,I)$ as its p-values
for some $A$ and $(c_{K,I})$ (on which it is \emph{based}).
The upper index $\infty$ in $\IRP^{\infty}(m,K,I)$ refers
to the infinite size of the summary space.
In the following two sections,
we will consider IRPs with finite $\mathbf{S}$
based on the separation aggregating p-variable \eqref{eq:B-sep},
which we will also refer to as separation IRPs.

The word ``above'' in the definition of separation IRPs
can be either inclusive or exclusive,
so that ``$\alpha_i$ is above $c_{K,I}$'' may mean either $\alpha_i>c_{K,I}$ or $\alpha_i\ge c_{K,I}$.
(More generally,
we may even replace the 2D array of numbers by a 2D array of closed rays in $\R$
that can be unbounded either on the right or on the left.)
For concreteness, let us use the latter meaning.
Then ``$\alpha_i$ is below $c_{K,I}$'' means $\alpha_i<c_{K,I}$.

The expression $0^0$ in \eqref{eq:SIRP} is treated as $1$.
Therefore, the term in the sum $\sum_{i=1}^I$ corresponding to $i=I$
only contains the term corresponding to $k=K$ in the sum $\sum_{k=0}^K$,
in which the factor $(\dots)^{K-k}$ can be ignored.

\begin{table}
  \caption{Some p-values $\IRP^{\infty}(m,K,I)$ for $m=9$,
    $K=0,1,2$ (corresponding to the conformal p-values of $10\%,20\%,30\%$, respectively),
    and $I=1,\dots,7$.}
  \begin{center}
  \begin{tabular}{lcccccccc}
    $I$   & 1       & 2       & 3       & 4       & 5       & 6       & 7 \\
    \midrule
    $K=0$ & 3.87\%  & 5.53\%  & 6.46\%  & 7.07\%  & 7.49\%  & 7.81\%  & 8.05\%  \\ 
    $K=1$ & 13.02\% & 14.59\% & 15.56\% & 16.23\% & 16.72\% & 17.10\% & 19.74\% \\
    $K=2$ & 22.67\% & 24.17\% & 25.14\% & 25.83\% & 26.34\% & 26.74\% & 29.70\%
  \end{tabular}
  \end{center}
  \label{tab:SIRP}
\end{table}

Table~\ref{tab:SIRP} shows the p-values \eqref{eq:SIRP}
that separation IRPs produce for a calibration sequence of length $m:=9$
when the conformal p-value takes its smallest values $10\%$, $20\%$, or $30\%$.
(We set $m:=19$ only for the simpler binary and, later, ternary IRPs
because results of computations are much less stable for $m=19$
as compared with smaller $m$ such as 9.)
In the case where the conformal p-value takes its smallest value $1/(m+1)$,
$I$ is the smallest index such that all calibration nonconformity scores are below $c_{0,I}$
and the test nonconformity score is above $c_{0,I}$,
and the separation IRP p-value \eqref{eq:SIRP} can then be written as
\begin{equation}\label{eq:SIRP-0}
  \IRP^{\infty}(m,0,I)
  =
  \max_{(p_0,\dots,p_I)\in\Delta_I}
  \sum_{i=1}^I
  \biggl(
    \sum_{j=0}^{i-1}
    p_j
  \biggr)^m
  p_i.
\end{equation}
The smallest possible p-value for separation IRPs corresponds to $I=1$ and is $3.87\%$.
All p-values in the table are below the corresponding conformal p-values,
and later we will see that each separation IRP strictly dominates the corresponding ICP.

To see how separation IRPs could be used for prediction,
we will use a very simple and standard inductive nonconformity measure
\citep[(4.16)]{Vovk/etal:2022book}.

\begin{example}\label{ex:standard}
  \upshape
  Consider the problem of regression, $\mathbf{Y}:=\R$.
  Train a regression model $\hat g:\mathbf{X}\to\R$ (such as a neural network)
  on $z_1,\dots,z_l$ as training sequence.
  Use $A(z_1,\dots,z_l,(x,y)):=\left|y-\hat g(x)\right|$
  as nonconformity measure,
  and set $\mathbf{S}:=[0,\infty)$.
  Let $\alpha_i:=\left|y_i-\hat g(x_i)\right|$,
  $i=l+1,\dots,n$,
  be the $(i-l)$th calibration nonconformity score.
  Arrange these nonconformity scores in the ascending order,
  $\alpha_{(1)}\le\dots\le\alpha_{(m)}$.
  Set $\hat y_{n+1}:=\hat g(x_{n+1})$.
  These are the prediction intervals $\Gamma^{\epsilon}$ (see \eqref{eq:Gamma})
  output by the separation IRP based on a threshold array $(c_{K,I})$ in~$\mathbf{S}$:
  \begin{enumerate}
  \item\label{it:conformal}
    First, we have the conformal prediction intervals:
    \[
      \Gamma^{\frac{K+1}{m+1}}
      =
      \left[
        \hat y_{n+1} - \alpha_{(m-K)},
        \hat y_{n+1} + \alpha_{(m-K)}
      \right].
    \]
  \item\label{it:I-start}
    Let $I_{0,1}$ be the smallest value of $I$ such that $c_{0,I}\in(\alpha_{(m)},\infty)$.
    (Such a value of $I$, here and later in this list,
    will exist under the density requirement
    for the threshold array.)
    Then the longest non-trivial prediction interval is
    \[
      \Gamma^{\IRP^{\infty}(m,0,I_{0,1})}
      =
      \left(
        \hat y_{n+1} - c_{0,I_{0,1}},
        \hat y_{n+1} + c_{0,I_{0,1}}
      \right).
    \]
  \item
    For $j=2,3,\dots$,
    let $I_{0,j}$ be the smallest value of $I$ such that $c_{0,I}\in(\alpha_{(m)},c_{0,I_{0,j-1}})$.
    Then the following prediction intervals are
    \[
      \Gamma^{\IRP^{\infty}(m,0,I_{0,j})}
      =
      \left(
        \hat y_{n+1} - c_{0,I_{0,j}},
        \hat y_{n+1} + c_{0,I_{0,j}}
      \right).
    \]
    This is an inductive definition in $j$.
    If the required value $I$ does not exist
    in any of the items \ref{it:I-start}--\ref{it:I-end},
    the corresponding prediction interval $\Gamma^{\IRP^{\infty}(m,K,I_{K,j})}$
    and any $\Gamma^{\IRP^{\infty}(m,K,I_{K,j'})}$ for $j'>j$
    are undefined at this stage
    (they will be defined in item \ref{it:end} below).
  \item
    For $K=1,\dots,m-1$,
    let $I_{K,1}$ be the smallest value of $I$ such that $c_{K,I}\in(\alpha_{(m-K)},\alpha_{(m-K+1)})$.
    Then
    \[
      \Gamma^{\IRP^{\infty}(m,K,I_{K,1})}
      =
      \left(
        \hat y_{n+1} - c_{K,I_{K,1}},
        \hat y_{n+1} + c_{K,I_{K,1}}
      \right).
    \]
  \item\label{it:I-end}
    Finally, for $K=1,\dots,m-1$ and $j=2,3,\dots$,
    let $I_{K,j}$ be the smallest value of $I$ such that $c_{K,I}\in(\alpha_{(m-K)},c_{K,I_{K,j-1}})$.
    Then the remaining prediction intervals of this type are
    \[
      \Gamma^{\IRP^{\infty}(m,K,I_{K,j})}
      =
      \left(
        \hat y_{n+1} - c_{K,I_{K,j}},
        \hat y_{n+1} + c_{K,I_{K,j}}
      \right).
    \]
  \item\label{it:end}
    For an arbitrary given $\epsilon>0$, define $\Gamma^{\epsilon}$
    as the intersection of all $\Gamma^{\epsilon'}$, $\epsilon'\le\epsilon$,
    defined in the previous items, \ref{it:conformal}--\ref{it:I-end}.
  \end{enumerate}
\end{example}

In the context of Example~\ref{ex:standard},
informal design principles for the threshold array $(c_{K,I})$ are:
we would like $c_{0,I}$ to be situated right above
the typical values of the largest calibration nonconformity score $\alpha_{(m)}$;
we would like $c_{K,I}$, $K=1,2,\dots$, to be situated mostly
inside a typical interval $(\alpha_{(m-K)},\alpha_{(m-K+1)})$
and closer to $\alpha_{(m-K)}$ for small $I$.
To calculate the likely intervals
$(\alpha_{(m)},\infty)$ and $(\alpha_{(m-K)},\alpha_{(m-K+1)})$
we may use the proper training sequence.

To apply a separation IRP predictor,
we need the function $\IRP^{\infty}$ of three variables,
$m$, $K$, and $I$,
defined by \eqref{eq:SIRP}.
Hopefully, for sizeable $m$ the dependence on $m$ will be very predictable;
we find a few asymptotic expressions in the following proposition.
If separation IRPs are ever used in practice,
it makes sense to make the sequence $c_{K,1},c_{K,2},\dots$ finite and short for each $K$.
We can say least about the dependence on $K$.

\begin{proposition}\label{prop:SIRP-values}
  The function $\IRP^{\infty}(m,K,I)$ defined by \eqref{eq:SIRP}
  is increasing in $(K,I)$ (in the sense of the lexicographic order),
  \begin{align}
    &\IRP^{\infty}(m,K,I)
    \in
    \left(
      \frac{K}{m+1},
      \frac{K+1}{m+1}
    \right]
    \text{, and}\label{eq:SIRP-value-1}\\
    &\IRP^{\infty}(m,0,2)
    \sim
    \frac{\exp(\e^{-1}-1)}{m}
    \approx
    \frac{0.531}{m}
    \text{ as $m\to\infty$}.
    \label{eq:SIRP-value-3}
  \end{align}
\end{proposition}

\noindent
The value $\IRP^{\infty}(m,0,1)=\IRP^1(m,0)$ is given by \eqref{eq:K-0}.
Since $\IRP^{\infty}(m,K,I)$ is increasing in $(K,I)$,
it is also increasing in $K$ and $I$ separately.
The approximation $0.531$ in \eqref{eq:SIRP-value-3}
roughly agrees with the value $5.53\%$ given in Table~\ref{tab:SIRP}
(when $m=19$, that value becomes $5.42\%$, and so the agreement becomes better).
See Sect.~\ref{app:SIRP-values} for a proof of Proposition~\ref{prop:SIRP-values}.

Now let us state formally that the separation IRP based on an inductive nonconformity measure $A$
dominates the ICP based on $A$
as corollary of Proposition~\ref{prop:SIRP-values}.
It is then obvious than the domination is usually strict,
which once again demonstrates the inadmissibility of typical ICPs.

\begin{corollary}
  Let $A$ be an inductive nonconformity measure.
  The separation IRP based on $A$ dominates the ICP based on $A$.
\end{corollary}

\begin{proof}
  The statement of the corollary follows from \eqref{eq:SIRP-value-1}.
\end{proof}

However, even separation IRPs are typically inadmissible
and strictly dominated by a calibration-invariant IRP.
Indeed, take any separation IRP
and any sequence $\alpha_{l+1},\dots,\alpha_{n+1}$ of distinct nonconformity scores
such that $\alpha_{n+1}$ is the largest number in this sequence
and $c_{0,1}$ separates it from the calibration nonconformity scores.
The maximum power probability $Q^{m+1}$,
where $Q\in\mathfrak{P}(\mathbf{Z})$,
of the set
\begin{equation}\label{eq:set}
  \left\{
    (\alpha_{\pi(l+1)},\dots,\alpha_{\pi(n)},\alpha_{n+1})
    \mid
    \pi\in\Sym(\{l+1,\dots,n\})
  \right\}
  \subseteq
  \mathbf{S}^{m+1}
\end{equation}
($\Sym(\{l+1,\dots,n\})$ being the family of all permutations of the set $\{l+1,\dots,n\}$)
is
\[
  \frac{m!}{(m+1)^{m+1}}
  \sim
  \sqrt{2\pi/m} \;
  \e^{-m-1},
\]
which is much smaller, for a large $m$,
than the smallest p-value attainable by a separation IRP.
Therefore, we can improve the given separation IRP
by redefining the p-value on the set \eqref{eq:set}.

\begin{remark}
  \upshape
  The non-trivial second addend in the function \eqref{eq:SIRP}
  is defined as the maximum of a homogenous polynomial of degree $m+1$
  over the unit simplex $\Delta_I$.
  This polynomial is not convex in general,
  as can be seen by differentiating the polynomial $p_0^2 p_1$
  that is maximized in $\IRP^{\infty}(2,0,1)$
  (for simplicity, replace $p_1$ by $1-p_0$).
  \iftoggle{FULL}{\bluebegin
    Replacing $p_1$ by $1-p_0$ gives $p_0^2(1-p_0)$,
    whose second derivative $2-6p_0$ changes sign at $p_0=1/3$.
  \blueend}{}%
  Despite the lack of convexity, this is a well-studied problem.
  The problem is NP-complete already for quadratic polynomials,
  but there are PTAS (polynomial-time approximation schemes) for a fixed $m$.
  (See \citealt{deKlerk/etal:2006,deKlerk/etal:2015,deKlerk/etal:2017}.)
\end{remark}

\section{Ternary IRPs}
\label{sec:ternary}

Let us call $\IRP^{\infty}(m,K,\infty)$ the \emph{complete p-value}
output by the separation IRP.
A major weakness of the separation IRPs is that
the complete p-values are just the conformal p-values.
Another manifestation of this weakness
is the limitation mentioned in Sect.~\ref{sec:introduction}:
the best p-value that a separation IRP can achieve is $K/(m+1)$
when the corresponding ICP achieves a p-value of $(K+1)/(m+1)$;
this can be seen directly from~\eqref{eq:SIRP}.
Making the summary space finite removes this limitation.

\begin{figure}[bt]
  \begin{center}
    \includegraphics[width=0.2\textwidth]{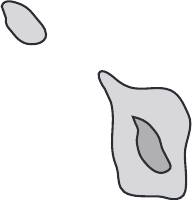}
  \end{center}
  \caption{Example of nested prediction sets
    (casual prediction in dark grey and confident prediction in light grey).}
  \label{fig:nested}
\end{figure}

In this section we discuss \emph{ternary IPRs},
which correspond to $\mathbf{S}$ of size 3.
This case is in the spirit of \citet[Figure~1.5]{Vovk/etal:2022book},
which corresponds to the quaternary case;
a simplified ternary version is shown as Figure~\ref{fig:nested}.
A ternary IRP is defined as separation IRP with $\left|\mathbf{S}\right|=3$,
and we will set $\mathbf{S}:=\{0,1,2\}$
(this particular choice does not restrict generality).
The threshold array $(c_{K,I})$ for it is such that $c_{K,1},c_{K,2}\in\{0.5,1.5\}$
and $c_{K,1}\ne c_{K,2}$ for all $K$;
the values of $c_{K,I}$ for $I>2$ will be irrelevant,
and it will be convenient to regard $I$ to be 1 or 2.
In our experiments we will set
\begin{equation}\label{eq:c}
  c_{K,1}
  :=
  \begin{cases}
    1.5 & \text{if $K<K^*$}\\
    0.5 & \text{otherwise};
  \end{cases}
\end{equation}
intuitively, when $K$ is smaller ($K<K^*$),
we aim for a much smaller p-value and expect $\alpha_{n+1}=2$.

The following proposition gives the p-values
output in the ternary case $\mathbf{S}=\{0,1,2\}$
by the separation aggregating p-variable $P_B$,
where $B$ is defined by \eqref{eq:B-sep}.
It uses the notation
\begin{equation}\label{eq:IRP-2}
  \IRP^2(m,K,I)
  :=
  \URP(B\le(K,I)),
\end{equation}
the possible values for $I$ being 1 and 2.
As before, the upper index in \eqref{eq:IRP-2} is $L:=\left|\mathbf{S}\right|-1$.
We will refer to $\IRP^2(m,K,2)$ as \emph{complete} p-values.

\begin{proposition}\label{prop:ternary}
  The complete p-values of the ternary IRPs are
  \begin{wideformula}\label{eq:2-complete}
    \IRP^2(m,K,2)
    =
    \max_{(p_0,p_1,p_2)\in\Delta_2}
    \NarrowLineBreak
    \sum_{k=0}^K
    \binom{m}{k}
    \left(
      (p_0+p_1)^{m-k}
      p_2^{k+1}
      +
      p_0^{m-k}
      (p_1+p_2)^k
      p_1
    \right).
  \end{wideformula}
  If $c_{K,1}=1.5$,
  \begin{multline}\label{eq:2-case1}
    \IRP^2(m,K,1)
    =
    \max_{(p_0,p_1,p_2)\in\Delta_2}
    \NarrowLineBreak
    \Biggl(
      \sum_{k=0}^{K-1}
      \binom{m}{k}
      \left(
        (p_0+p_1)^{m-k}
        p_2^{k+1}
        +
        p_0^{m-k}
        (p_1+p_2)^k
        p_1
      \right)\\
      +
      \binom{m}{K}
      (p_0+p_1)^{m-K}
      p_2^{K+1}
    \Biggr).
  \end{multline}
  And if $c_{K,1}=0.5$,
  \begin{multline}\label{eq:2-case2}
    \IRP^2(m,K,1)
    =
    \max_{(p_0,p_1,p_2)\in\Delta_2}
    \NarrowLineBreak
    \Biggl(
      \sum_{k=0}^{K-1}
      \binom{m}{k}
      \left(
        (p_0+p_1)^{m-k}
        p_2^{k+1}
        +
        p_0^{m-k}
        (p_1+p_2)^k
        p_1
      \right)\\
      +
      \binom{m}{K}
      p_0^{m-K}
      (p_1+p_2)^K
      p_1
      +
      \binom{m}{K}
      p_0^{m-K}
      p_2^{K+1}
    \Biggr).
  \end{multline}
\end{proposition}

\begin{table}
  \caption{The p-values (in $\%$) for the ternary IRP, $m=19$, and $K=0,\dots,7$, as explained in text.
    The value of $c_{K,1}$ is given as the lower index of $\IRP^2$.}
  \begin{center}
  \begin{tabular}{lcccccccc} 
    $K$                    & 0              & 1              & 2              & 3
                           & 4              & 5              & 6              & 7 \\
    \midrule
    $\IRP^2_{1.5}(19,K,1)$ & \textbf{1.89}  & \textbf{4.73}  & \textbf{8.06}  & \textbf{11.70}
                           & \textbf{15.56} & 19.59          & 23.76          & 28.07 \\
    $\IRP^2_{0.5}(19,K,1)$ & 1.89           & 4.89           & 8.36           & 12.11
                           & 16.07          & \textbf{20.19} & \textbf{24.45} & \textbf{28.84} \\
    $\IRP^2(19,K,2)$       & \textbf{2.71}  & \textbf{6.01}  & \textbf{9.64}  & \textbf{13.49}
                           & \textbf{17.52} & \textbf{21.69} & \textbf{25.99} & \textbf{30.40}
  \end{tabular}
  \end{center}
  \label{tab:ternary}
\end{table}

For a proof, see Sect.~\ref{app:ternary}.
Table~\ref{tab:ternary} gives some numerical values for ternary IRPs and $m:=19$.
(For the corresponding values for the ICP and binary IRP, see Table~\ref{tab:binary}.)
To apply these values,
we need to decide on the values of $c_{K,1}$.
Finding $(K,I)$ as in \eqref{eq:B-sep},
we select the corresponding p-value from:
\begin{itemize}
\item
  the first row of the table if $I=1$ and $c_{K,1}=1.5$;
\item
  the second row of the table if $I=1$ and $c_{K,1}=0.5$;
\item
  the third row of the table if $I=2$.
\end{itemize}
The bold entries in Table~\ref{tab:ternary} correspond
to the threshold array \eqref{eq:c} for $K^*:=5$
(to be used in our experiment later in this section).

As in the binary case,
a ternary IRP based on an intrinsic inductive nonconformity measure
clearly dominates the corresponding ICP;
even the complete p-values (those in the third row of Table~\ref{tab:ternary})
are significantly better than the conformal p-values $0.05(K+1)$.

\begin{remark}
  \upshape
  Ternary IRPs as defined in this section are still inadmissible,
  since we can break extra ties as compared with \eqref{eq:IRP-2}
  if we use $(K,I,J)$ (with lexicographic order)
  as conformity statistic, where
  $J:=\left|\{i\in\{l+1,\dots,n\}\mid\alpha_i=2\}\right|$.
\end{remark}

\subsection*{Specific extrinsic ternary IRP}

\begin{figure}[bt]
  \begin{center}
    \includegraphics[width=0.6\textwidth]{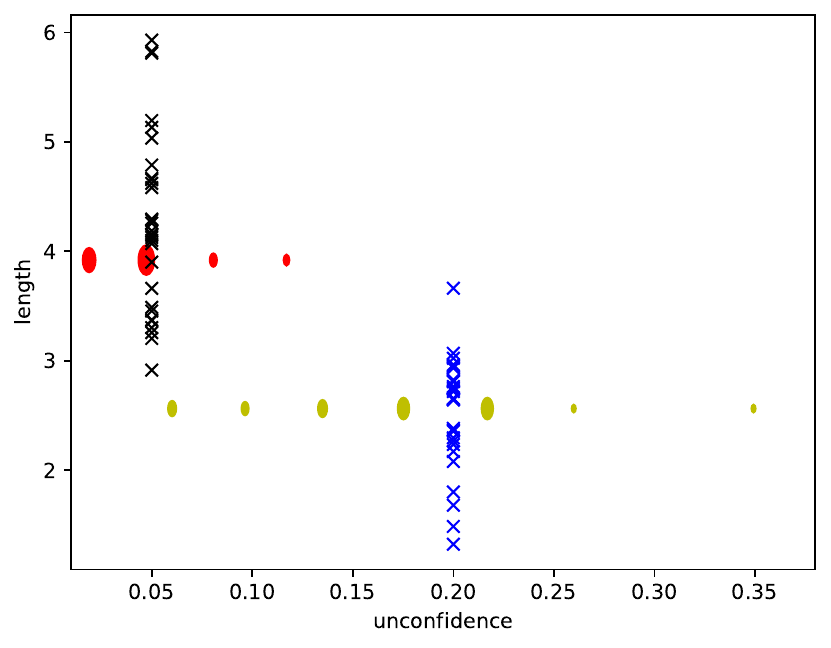}
  \end{center}
  \caption{The ternary IRP (red and yellow circles) and ICP (black and blue crosses) as described in text}
  \label{fig:ternary}
\end{figure}

Now let us discuss in detail a specific ternary IRP
for an extrinsic inductive nonconformity measure,
comparing it with the corresponding ICP.
(Similarly to what we did for binary IRPs in Sect.~\ref{sec:binary}.)
Set $m:=19$ again, and we will again use nonconformity scores $\alpha_i:=\left|r_i\right|$
modelled as $N(0,1)$.
As in Table~\ref{tab:ternary}, we set $K^*:=5$
(although the figure summarizing our results will depend little on the choice of $K^*$).

In the ternary case, we use two thresholds for the nonconformity scores,
which correspond to our convention for casual and confident predictions
in the terminology of \citet[Figure~1.5]{Vovk/etal:2022book}
(confident predictions making an error with probability around 5\%
and casual predictions making an error with probability around 20\%).
The thresholds are chosen in such a way that a random test nonconformity score
is confidently rejected with probability close to 5\%,
is casually (but not confidently) rejected with probability close to 15\%,
and is not rejected at all with probability close to 80\%.
(The validity of the p-values produced by our IRP does not depend
on this informal requirement.)
Namely, we set $U:=\Phi^{-1}(0.975)$ (as in the binary case)
and $U':=\Phi^{-1}(0.9)$.
The ternary inductive nonconformity measure produces nonconformity scores
\begin{equation}\label{eq:alpha'}
  \alpha'_i
  :=
  \begin{cases}
    2 & \text{if $\alpha_i\ge U$}\\
    1 & \text{if $U'\le\alpha_i<U$}\\
    0 & \text{if $\alpha_i<U'$}.
  \end{cases}
\end{equation}
Figure~\ref{fig:ternary} is the analogue of Figure~\ref{fig:binary} for ternary IRPs
with the switch-over $K$ equal to $K^*=5$.
The crosses give the lengths of the prediction intervals produced
by the ICP based on the original inductive nonconformity measure;
the significance level can be read off the horizontal axis
as 5\% (for the black crosses) or 20\% (for the blue crosses).
In both cases the lengths are variable
as they correspond to different calibration sequences.
The red circles correspond to confident prediction intervals
output by the ternary IRP based on \eqref{eq:alpha'},
and the yellow circles correspond to casual prediction intervals
output by those ternary IRP.
The lengths of confident prediction intervals are always $2U$,
since they reject the test labels leading to $\alpha'_{n+1}=2$;
similarly, the lengths of casual prediction intervals are always $2U'$.
What is variable is their unconfidences,
defined as $c$ in \eqref{eq:hedged} and taken from Table~\ref{tab:ternary}.

\begin{figure}[bt]
  \begin{center}
    \includegraphics[width=0.32\textwidth]{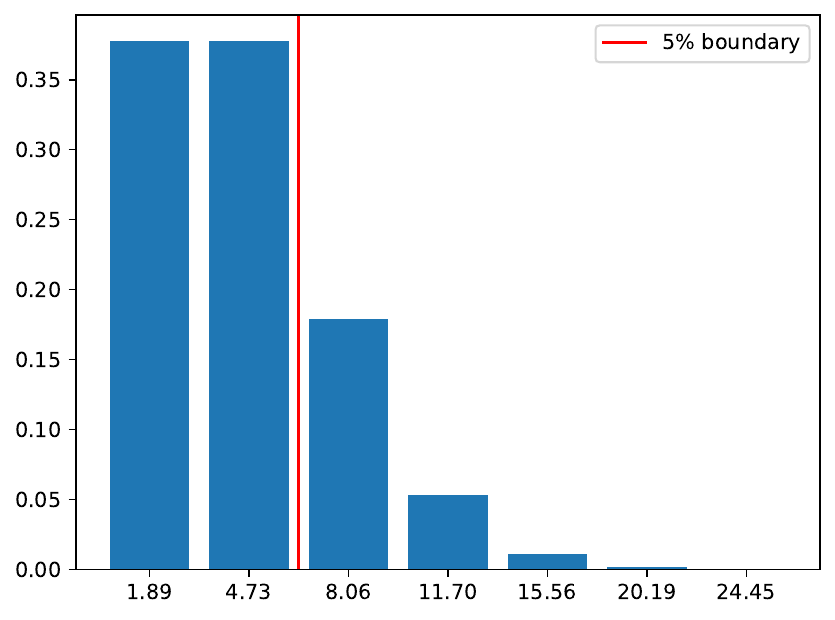}
    \includegraphics[width=0.32\textwidth]{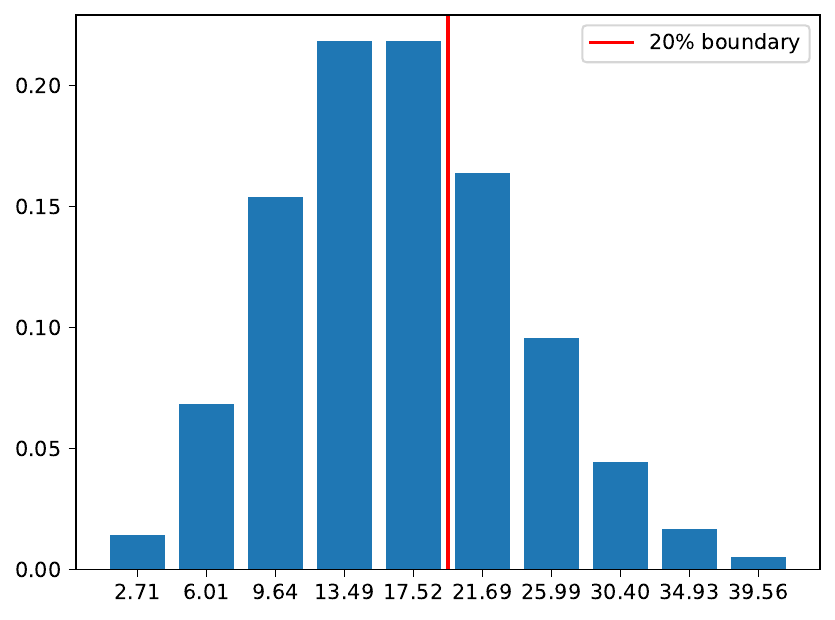}
    \includegraphics[width=0.32\textwidth]{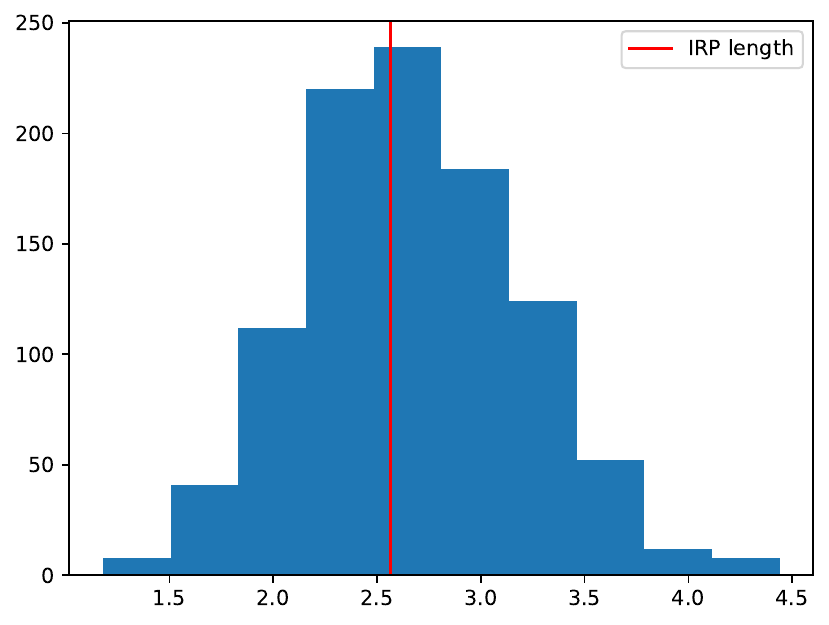}
  \end{center}
  \caption{The analogue of Figure~\ref{fig:binary-stats}
    for the ICP and ternary IRP of Figure~\ref{fig:ternary}.
    Left panel: the bar chart for the unconfidences
    of the confident prediction intervals output by the IRP.
    Middle panel: the analogous bar chart for the casual prediction intervals.
    Right panel: the histogram of the lengths of the prediction intervals
    output by the ICP at $20\%$.}
  \label{fig:ternary-stats}
\end{figure}

Figure~\ref{fig:ternary-stats} is analogous to Figure~\ref{fig:binary-stats},
and its three panels are described in its caption.
The bar chart in the left panel is identical
to the one in the left panel of Figure~\ref{fig:binary-stats}
apart from the labels on the horizontal axis.
There is no need to complement that panel by the histogram for $5\%$,
since we already have it
in Figure~\ref{fig:binary-stats} (right panel).
We can see that the ternary IRP is competitive with the ICP
even for this extrinsic inductive nonconformity measure.

\section{General discrete IRPs}
\label{sec:discrete}

In this section we consider the general case of a finite $\mathbf{S}$
assuming, without loss of generality, $\mathbf{S}=\{0,\dots,L\}$ for $L\in\N_1$.
Then, again without loss of generality, $c_{K,I}$, $I=1,\dots,L$, are all different
and take values in $\{0.5,1.5,\dots,L-0.5\}$.
Define $\IRP^L(m,K,I)$ by \eqref{eq:IRP-2} with $L$ in place of 2.
The \emph{complete} p-values in this context are those, $\IRP^L(m,K,L)$,
corresponding to $I=L$.
The following proposition, to be proved in Sect.~\ref{app:discrete},
only covers the complete p-values;
the other p-values very much depend on the choice of the threshold array.

\begin{proposition}\label{prop:SIRP^L}
  The formula for the complete p-values is
  \begin{equation}\label{eq:SIRP^L}
    \IRP^L(m,K,L)
    =
    \max_{(p_0,\dots,p_L)\in\Delta_L}
    \sum_{k=0}^K
    \binom{m}{k}
    \sum_{J=1}^L
    \left(
      \sum_{j=0}^{J-1}
      p_j
    \right)^{m-k}
    p_J
    \left(
      \sum_{j=J}^L
      p_j
    \right)^k.
  \end{equation}
  For arbitrary but fixed $L$ and $K$ and for $m\to\infty$,
  the optimal p-value \eqref{eq:SIRP^L} is asymptotically equivalent
  to $C/m$, where $C$ is the value of the optimization problem
  \begin{equation}\label{eq:F}
    \sum_{J=1}^L
    c_J
    \sum_{k=0}^K
    \frac{1}{k!}
    \exp
    \left(
      -\sum_{j=J}^L
      c_j
    \right)
    \left(
      \sum_{j=J}^L
      c_j
    \right)^k
    \to\max
  \end{equation}
  whose variables $c_1,\dots,c_L$ range over $[0,\infty)$.
\end{proposition}

Notice that the sum $\sum_{k=0}^K$ in \eqref{eq:F}
is the value at $K$ of the CDF of the Poisson distribution
with parameter $\sum_{j=J}^L c_j$.

In the binary case $L=1$ the optimization problem \eqref{eq:F} becomes
\begin{equation*}
  \sum_{k=0}^K
  \frac{1}{k!}
  \exp(-c)
  c^{k+1}
  \to\max.
\end{equation*}
Setting the derivative in $c$ to 0 and simplifying leads to the equation
\begin{equation}\label{eq:root}
  \sum_{k=0}^K\frac{c^k}{k!}
  =
  \frac{c^{K+1}}{K!}
\end{equation}
for the optimal value of $c$,
and this equation will be used in Sect.~\ref{app:binary} to derive
most of the asymptotic expressions in Proposition~\ref{prop:binary}.

\begin{table}
  \caption{The p-values for $\IRP^L$ in $\%$ for $m=9$, range of $L$, and three values of $K$.}
  \begin{center}
  \begin{tabular}{lcccccccc} 
    $L$     & 1     & 2     & 3     & 4     & 5     & 6     & 7     & 100 \\
    \midrule
    $K=0$   & 3.87  & 5.53  & 6.46  & 7.07  & 7.49  & 7.81  & 8.05  & 9.83 \\
    $K=1$   & 9.05  & 12.37 & 14.13 & 15.22 & 15.96 & 16.51 & 16.92 & 19.74 \\
    $K=2$   & 15.10 & 20.01 & 22.46 & 23.94 & 24.93 & 25.64 & 26.18 & 29.69
  \end{tabular}
  \end{center}
  \label{tab:SIRP-L}
\end{table}

\begin{table}
  \caption{The asymptotic numerators in the p-values for $\IRP^L$ for a range of $L$, and three values of $K$,
  as in Table~\ref{tab:SIRP-L}.}
  \begin{center}
  \begin{tabular}{lcccccccc} 
    $L$     & 1     & 2     & 3     & 4     & 5     & 6     & 7     & 8 \\
    \midrule
    $K=0$   & 0.368 & 0.531 & 0.626 & 0.688 & 0.732 & 0.765 & 0.790 & 0.811 \\
    $K=1$   & 0.840 & 1.171 & 1.352 & 1.467 & 1.547 & 1.606 & 1.651 & 1.686 \\
    $K=2$   & 1.371 & 1.866 & 2.126 & 2.288 & 2.398 & 2.479 & 2.540 & 2.588
  \end{tabular}
  \end{center}
  \label{tab:SIRP-L-as}
\end{table}

Table~\ref{tab:SIRP-L} gives non-asymptotic p-values,
namely for $m:=9$.
The first column of the table (3.87, 9.05, 15.10)
is the analogue of the corresponding entries (1.89, 4.35, 7.18)
in Table~\ref{tab:binary} for a smaller $m$ (9 instead of 19).
Similarly, the second column of the table (5.53, 12.37, 20.01)
is the analogue of the corresponding entries (2.71, 6.01, 9.64)
in Table~\ref{tab:ternary} for a smaller~$m$.

Table~\ref{tab:SIRP-L-as} gives the asymptotic numerators
according to \eqref{eq:F}.
Its first column agrees with the corresponding entries
in Table~\ref{tab:binary-as}.

\section{Conclusion}
\label{sec:conclusion}

In this paper we have defined IRPs and started their study.
Whereas ICPs are inadmissible and are dominated by separation IRPs,
it remains unclear whether separation IRPs,
or other IRPs different from ICPs,
can be useful in practice.

Substantial domination of the separation IRPs constructed in this paper over ICPs
happens only for very small conformal p-values
(first of all, for $\frac{1}{m+1}$).
As the conformal p-value increases
(relative to $\frac{1}{m+1}$, even if it is very small by itself),
separation IRPs quickly become almost indistinguishable from ICPs.
The negative results about randomness predictors
reported in \citet{Vovk:arXiv2502} carry over to IRPs,
but they do not explain this phenomenon.
Can we either strengthen those results or construct better IRPs?

As alluded to in Sect.~\ref{sec:introduction},
this paper is part of a wider research programme,
that of investigating the family of the IRPs that are not ICPs,
in particular establishing its size and usefulness in practice.
The existing results suggesting that
conformal predictors are almost as efficient as randomness predictors
(namely, the results of \citealt{Nouretdinov/etal:2003ALT} and \citealt{Vovk:arXiv2502})
connect conformal predictors and randomness predictors via e-predictors,
which are based on e-values rather than p-values.
This suggests another direction of research:
designing (inductive) randomness e-predictors
that are more efficient than any (inductive) conformal e-predictors.
Besides, efficiency in the sense of producing small p-values
is not the only desideratum in confidence prediction;
it would be interesting to investigate conditionality properties
(in various senses; cf.\ \citealt[Sect.~1.4.4 and Figure~4.8]{Vovk/etal:2022book})
of IRPs.

Finally, we should not forget the advantages of conformal predictors
that are completely lost in non-conformal randomness prediction;
one of them is the independence of errors in the online mode of prediction,
which leads to the possibility of conformal testing.

\subsection*{Acknowledgments}

Many thanks to Alexander Shen for his advice
and to COPA 2025 reviewers for detailed and useful comments.
Computational experiments in this paper used \textsc{Wolfram Mathematica}
and the Python library \texttt{scipy.optimization}.

\appendix
\section{Proofs and complements}
\label{app:proofs}

\subsection{Proof of and complements to Proposition~\ref{prop:binary}}
\label{app:binary}

The following proposition complements the statement of Proposition~\ref{prop:binary}.

\begingroup
\renewcommand{\thetheorem}{\ref{prop:binary}$^\prime$}
\begin{proposition}\label{prop:binary-plus}
  For an arbitrary but fixed $K$ and $m\to\infty$,
  the optimal value of $p$ in \eqref{eq:main} is asymptotically equivalent to $c/m$,
  where $c$ is the unique positive root of the polynomial equation \eqref{eq:root}.
  The p-value \eqref{eq:main} is asymptotically equivalent to
  \begin{equation}\label{eq:numerator}
    \sum_{k=0}^K
    \frac{c^{k+1}\e^{-c}}{k!\,m}
    =
    \frac{c^{K+2}\e^{-c}}{K!\,m}.
  \end{equation}
  A non-asymptotic statement for the difference
  between \eqref{eq:main} and \eqref{eq:numerator} is
  \begin{equation}\label{eq:accuracy}
    \IRP^1(m,K)
    \ge
    \frac{c^{K+2}\e^{-c}}{K!\,m}
    -
    c^2(2\wedge c)
    m^{-2}.
  \end{equation}
  In addition to \eqref{eq:K-0},
  \begin{itemize}
  \item
    for $K=1$, the p-value is asymptotically equivalent (as $m\to\infty$) to
    \begin{equation}\label{eq:K-1}
      \frac{(\phi+\phi^2)\exp(-\phi)}{m}
      =
      \frac{\phi^3\exp(-\phi)}{m}
      \approx
      \frac{0.84}{m},
    \end{equation}
    where $\phi:=(1+\sqrt{5})/2\approx1.62$ is the golden ratio,
  \item
    for $K=2$, the p-value is asymptotically equivalent to
    \begin{equation}\label{eq:K-2}
      \frac{(c^4/2)\exp(-c)}{m}
      \approx
      \frac{1.37}{m},
    \end{equation}
    where
    \[
      c
      :=
      \frac{1+(37-3\sqrt{114})^{1/3}+(37+3\sqrt{114})^{1/3}}{3}
      \approx
      2.27,
    \]
  \item
    and for $K=3$, the p-value is asymptotically equivalent to
    \begin{equation}\label{eq:K-3}
      \frac{(c^5/6)\exp(-c)}{m}
      \approx
      \frac{1.94}{m},
    \end{equation}
    where
    \begin{multline*}
      c
      :=
      \frac14
      +
      \frac14
      \Bigl(
        4(\sqrt{778} - 7)^{1/3} - 36(\sqrt{778} - 7)^{-1/3} + 9
      \Bigr)^{1/2}\\
      +
      \frac12
      \biggl(
        -(\sqrt{778} - 7)^{1/3} + 9(\sqrt{778} - 7)^{-1/3} + \frac92\\
        + 
        \frac{61}
        {
          2 \sqrt{4(\sqrt{778} - 7)^{1/3} - 36(\sqrt{778} - 7)^{-1/3} + 9}
        }
      \biggr)^{1/2}
      \approx
      2.94.
    \end{multline*}
  \end{itemize}
\end{proposition}
\endgroup

Let us now prove Propositions~\ref{prop:binary} and~\ref{prop:binary-plus}.
We can assume, without loss of generality,
that $K<m$ in Proposition~\ref{prop:binary}
(otherwise the statement of the proposition is trivial),
and this assumption then implies
that the inductive nonconformity measure $A$ is a surjection.
Let $B_p$ be the Bernoulli probability measure on $\{0,1\}$
with parameter $p\in[0,1]$:
$B_p(\{1\})=p$.
Since the sequence $\alpha_{l+1},\dots,\alpha_{n+1}$ is IID,
the p-value is the largest probability under $B_p^{m+1}$
of the event of observing at most $K$ 1s among $\alpha_{l+1},\dots,\alpha_n$
and observing $\alpha_{n+1}=1$.
This gives the expression \eqref{eq:main}.

When $K=0$, $\max_p p(1-p)^m$ is attained at $p=\frac{1}{m+1}$,
which leads to \eqref{eq:K-0}.
The inequality
\begin{equation}\label{eq:inequality}
  \frac{m^m}{(m+1)^{m+1}}
  \le
  \frac{\exp(-1)}{m}
\end{equation}
is equivalent to
\begin{equation*}
  \left(
    1 - \frac{1}{m+1}
  \right)^{m+1}
  \le
  \exp(-1)
\end{equation*}
and is easy to check.

When $K=1$,
solving the optimization problem
\begin{equation}\label{eq:objective}
  p(1-p)^m + m p^2(1-p)^{m-1}
  \to
  \max
\end{equation}
leads to a quadratic equation with the solution in $[0,1]$ equal to
\[
  \frac{m-2+\sqrt{5m^2-4m}}{2(m^2-1)}
  \sim
  \frac{\phi}{m}.
\]
Plugging this into the objective function~\eqref{eq:objective}
gives~\eqref{eq:K-1}.

Now let us deal with an arbitrary (but fixed) $K$ and let $m\to\infty$.
The optimal value of $p$ in \eqref{eq:main}
will be of the form $p\sim c/m$ for a constant $c$
(as we will see later in the proof).
Plugging $p\sim c/m$ into the expression following $\max_{p\in[0,1]}$ in \eqref{eq:main},
we can see that this expression is asymptotically equivalent
to the left-hand side of \eqref{eq:numerator}.
Setting the derivative in $c$ of the left hand-side of \eqref{eq:numerator} to 0,
we can check directly,
as we did in Sect.~\ref{sec:discrete},
that the optimal $c$ satisfies the equation~\eqref{eq:root}.
That equation has a unique positive root by Descartes's rule of signs
(see, e.g., \citealt{Wang:2004}).
The uniqueness of a positive root implies
that the left-hand side of \eqref{eq:numerator} attains its maximum at the root.
The equality in \eqref{eq:numerator} follows from~\eqref{eq:root}.
This gives the left-hand sides of \eqref{eq:K-2} and \eqref{eq:K-3}
for $K=2$ and $K=3$.
In these cases, we obtain cubic and quartic equations for $c$, respectively,
and their solutions are given in the statement of Proposition~\ref{prop:binary-plus}.

To obtain the non-asymptotic statement~\eqref{eq:accuracy},
it suffices to set $p:=c/m$ in the expression being maximized
in \eqref{eq:main}
and then to apply Prokhorov's bound $2c(2\wedge c)/m$
\citep[Sect.~3.12]{Shiryaev:2016}
on the total variation distance between the binomial distribution
with parameters $(m,c/m)$
and the Poisson distribution with parameter $c$.

\subsection{Proof of Proposition~\ref{prop:inadmissible}}
\label{app:inadmissible}

Let $A$ be an inductive nonconformity measure;
let us check that we can improve on the corresponding ICP $\IRP_{A,\Pi}$
and define an IRP $\IRP_{A,P}$ strictly dominating $\IRP_{A,\Pi}$.
If $A$ takes only one value, $\IRP_{A,\Pi}$ always outputs 1
and so is clearly inadmissible
(being strictly dominated by the ICP based
on any inductive nonconformity measure taking at least two distinct values).
So let us assume that $A$
takes at least two distinct values,
choose arbitrarily $a\in(\inf A,\sup A)$, and define $P$ as
\begin{multline*}
  P(\alpha_{l+1},\dots,\alpha_{n+1})
  :=\\
  \begin{cases}
    \frac{m^m}{(m+1)^{m+1}} & \text{if $\alpha_{n+1}>a$
      and $\alpha_i<a$ for all $i\in\{l+1,\dots,n\}$}\\
    \Pi(\alpha_{l+1},\dots,\alpha_{n+1}) & \text{otherwise}.
  \end{cases}
\end{multline*}
By inequality~\eqref{eq:inequality},
which also holds with ``$<$'' in place of ``$\le$'',
$P$ can produce p-values that are impossible for ICPs.

It is easy to check that $P$ is an aggregating p-variable:
\begin{itemize}
\item
  when $\epsilon\ge\frac{1}{m+1}$,
  $Q^{m+1}(P\le\epsilon)\le\epsilon$
  follows from $Q^{m+1}(\Pi\le\epsilon)\le\epsilon$
  (since $P$ improves on $\Pi$ only when $\Pi=\frac{1}{m+1}$),
\item
  when $\epsilon<\frac{1}{m+1}$,
  $Q^{m+1}(P\le\epsilon)\le\epsilon$
  follows from the fact that the probability that $B_p^{m+1}$
  produces exactly one 1 and that the 1 is the last bit
  is given by the expression following ``$=$'' in \eqref{eq:K-0}.
\end{itemize}
Therefore, $\IRP_{A,P}$ is an IRP that strictly dominates the ICP $\IRP_{A,\Pi}$.

\subsection{Proof of Proposition~\ref{prop:SIRP}}
\label{app:SIRP}

This proof will show that any separation IRP is a bona fide IRP.
The definition of a separation IRP given in Sect.~\ref{sec:SIRP}
can be restated as follows.
Define $E'_{K,I}$ to be the event that
the conformal p-value is $(K+1)/(m+1)$,
$c_{K,I}$ $K$-separates the test nonconformity score,
and $c_{K,I'}$ does not $K$-separate the test nonconformity score for any $I'<I$.
These events are disjoint,
and setting
\[
  E_{K,I}
  :=
  \cup_{(K',I')\le(K,I)}
  E'_{K',I'},
\]
where ``$\le$'' is the lexicographic order,
we obtain a family of nested events (in the lexicographic order).
The ideal separation IRP then outputs the p-value $\URP(E_{K,I})$
for a given summary space, threshold array, etc.,
and the separation IRP outputs
an upper bound on $\URP(E_{K,I})$.
The notation $E'_{K,I}$ and $E_{K,I}$ will be used repeatedly
in the rest of this section.
We also define $E_{K,\infty}$
to be the event that the conformal p-value is $(K+1)/(m+1)$;
$E_{K,\infty}=\cup_I E_{K,I}$ provided the threshold array is dense.

We are required to check that \eqref{eq:SIRP} is a p-value.
The innermost nested set $E_{0,1}$ is defined as the event
that $c_{0,1}$ separates the test nonconformity score
from the calibration nonconformity scores:
$\alpha_{n+1}\ge c_{0,1}$ while $\alpha_i<c_{0,1}$ for all $i\in\{l+1,\dots,n\}$.
(This corresponds to 0-separation as defined earlier.)
The probability of this event under randomness is $p_0^m p_1$,
where $p_0$ is the probability that $A(z_1,\dots,z_l,Z)\in(-\infty,c_{0,1})$
and $p_1$ is the probability that $A(z_1,\dots,z_l,Z)\in[c_{0,1},\infty)$.
(As before, $Z\sim Q$, where $Q^{n+1}$ is the data-generating distribution.)
This allows us to define
\begin{equation*}
  \IRP^{\infty}(m,0,1)
  =
  \max_{(p_0,p_1)\in\Delta_1}
  p_0^m p_1,
\end{equation*}
in agreement with \eqref{eq:SIRP}.

For a given $I\in\N_1$,
the event $E_{0,I}$ is defined as one of $c_{0,1},\dots,c_{0,I}$
separating the test nonconformity score
from the calibration nonconformity scores.
Let $c_{(1)},\dots,c_{(I)}$ be the sequence $c_{0,1},\dots,c_{0,I}$
sorted in the ascending order;
we extend it by setting $c_{(0)}:=-\infty$ and $c_{(I+1)}:=\infty$.
The probability of the conjunction of the separation
and the test nonconformity score lying in $[c_{(i)},c_{(i+1)})$
is equal to $(p_0+\dots+p_{i-1})^m p_i$,
where $p_j$ is the probability of
$A(z_1,\dots,z_l,Z)\in[c_{(j)},c_{(j+1)})$.
This allows us to set
\begin{wideformula*}
  \IRP^{\infty}(m,0,I)
  =\NarrowLineBreak
  \max_{(p_0,\dots,p_I)\in\Delta_I}
  \left(
    p_0^m p_1
    +
    (p_0+p_1)^m p_2
    +\dots+
    (p_0+\dots+p_{I-1})^m p_I
  \right),
\end{wideformula*}
which again agrees with \eqref{eq:SIRP}
(cf.\ \eqref{eq:SIRP-0}).

Now we assume $K\ge1$.
The event $E_{K,I}$ is defined as the disjunction of the conformal p-value being at most $K/(m+1)$
and the test nonconformity score being $K$-separated from the calibration nonconformity scores
by an element of the set $\{c_{K,1},\dots,c_{K,I}\}$.

We proceed by induction in $K$,
assuming that \eqref{eq:SIRP} works for $K'<K$ in place of $K$.
On the event $E_{K-1,\infty}$ the conformal p-value is at most $K/(m+1)$.
The addend in the first line of \eqref{eq:SIRP} corresponds to (and upper bounds)
the probability of $E_{K-1,\infty}$
under any power probability measure $Q^{m+1}$ on $\mathbf{S}^{m+1}$.
Let us check that the term in the second line of \eqref{eq:SIRP}
corresponds to the probability of the event
$
  E''_{K,I}
  :=
  E_{K,I}
  \setminus
  E_{K-1,\infty}
$
that an element of $c_{K,1},\dots,c_{K,I}$
(or equivalently, of $c_{(1)},\dots,c_{(I)}$,
which are $c_{K,1},\dots,c_{K,I}$ rearranged in the ascending order, as above)
$K$-separates the test nonconformity score.
The index $i$ in the second line of \eqref{eq:SIRP} stands for the part of $E''_{K,I}$
corresponding to $\alpha_{n+1}\in[c_{(i)},c_{(i+1)})$,
and the index $k$ stands for the part of that part
corresponding to there being exactly $k$ calibration nonconformity scores $\alpha_j$,
$j\in\{l+1,\dots,n\}$,
such that $\alpha_j\in[c_{(i)},c_{(i+1)})$ and $\alpha_j\ge\alpha_{n+1}$.
The second line of \eqref{eq:SIRP} is obtained by the multiplication of several terms:
\begin{itemize}
\item
  the probability that exactly $m-K$ calibration nonconformity scores
  with specified indices are below $c_{(i)}$ is
  \[
    \biggl(
      \sum_{j=0}^{i-1}
      p_j
    \biggr)^{m-K};
  \]
\item
  the probability that exactly $K-k$ of the remaining $K$ calibration nonconformity scores
  with specified indices are above $c_{(i+1)}$ is
  \[
    \biggl(
      \sum_{j=i+1}^I
      p_j
    \biggr)^{K-k};
  \]
\item
  the probability that the remaining $k$ calibration nonconformity scores
  and the test nonconformity score are in $[c_{(i)},c_{(i+1)})$
  is $p_i^{k+1}$;
\item
  the conditional probability (given the event in the previous item)
  that all those $k$ calibration nonconformity scores
  are above the test nonconformity score is $1/(k+1)$
  (this assumes that the calibration and test nonconformity scores are all different
  and is the only place in this proof
  where we use the assumption of continuity of the inductive nonconformity scores);
\item
  finally, there are
  \[
    \binom{m}{m-K}
    \binom{K}{K-k}
  \]
  ways to specify the positions of the indices in the first two items.
\end{itemize}

Let us check that the convention in the statement of the proposition about the case $I=\infty$
agrees with \eqref{eq:SIRP} provided the threshold array is dense
(or at least non-trivial in a weak sense).
It is easier to use the derivation of \eqref{eq:SIRP}
than \eqref{eq:SIRP} itself.
Letting $I\to\infty$, we can take all $p_j=1/(I+1)$ equal and shrinking to 0,
and then the probability of $E''_{K,I}$ will tend to $1/(m+1)$
(the probability that the rank of the last observation is $K+1$
in an IID series of $m+1$ continuously distributed observations).
On the other hand, the upper bound of $(K+1)/(m+1)$ on $\IRP^{\infty}(m,K,\infty)$
is obvious.

\subsection{Proof of Proposition~\ref{prop:SIRP-values}}
\label{app:SIRP-values}

The monotonicity of $\IRP^{\infty}$ immediately follows from its definition.

Let us check \eqref{eq:SIRP-value-3}.
For $K=0$ and $I=2$ our optimization problem (see \eqref{eq:SIRP-0}) can be written as
\begin{equation}\label{eq:objective-2}
  p_0^m (1-p_0-p_2) + (1-p_2)^m p_2
  \to
  \max
\end{equation}
(after substituting $1-p_0-p_2$ for $p_1$).
Setting the partial derivatives of the objective function in $p_0$ and $p_2$ to 0
we obtain
\[
  p_0
  =
  1 + \frac{\e^{-1}-2}{m} + O(m^{-2}),
  \quad
  p_2
  =
  \frac{1-\e^{-1}}{m} + O(m^{-2})
\]
(so that $p_0+p_2<1$ asymptotically, as it should).
Plugging this into the objective function in \eqref{eq:objective-2} gives
\begin{multline*}
  \IRP^{\infty}(m,0,2)
  =
  \left(
    1 + \frac{\e^{-1}-2}{m} + O(m^{-2})
  \right)^m
  \left(
    \frac1m + O(m^{-2})
  \right)\\
  +
  \left(
    1 + \frac{\e^{-1}-1}{m} + O(m^{-2})
  \right)^m
  \left(
    \frac{1-\e^{-1}}{m} + O(m^{-2})
  \right)\\
  =
  \frac{\exp(\e^{-1}-2)}{m}
  +
  \frac{\exp(\e^{-1}-1)(1-\e^{-1})}{m}
  +
  O(m^{-2}) 
  =
  \frac{\exp(\e^{-1}-1)}{m}
  +
  O(m^{-2}).
  \iftoggle{CONF}{}{\qedhere}
\end{multline*}

\subsection{Proof of Proposition~\ref{prop:ternary}}
\label{app:ternary}

There is no need to prove \eqref{eq:2-complete},
since this is a special case of Proposition~\ref{prop:SIRP^L},
which we prove in Sect.~\ref{app:discrete}.

The addends of the sum over $k$
in the first lines of \eqref{eq:2-case1} and \eqref{eq:2-case2}
give the probability of the event
that the conformal p-value is $K/(m+1)$ or less
if the distribution of the nonconformity score
$A(z_1,\dots,z_l,Z)$ is $(p_0,p_1,p_2)$
(meaning that the probability of it taking value $j\in\{0,1,2\}$ is $p_j$).
The addend in the second line of \eqref{eq:2-case1}
is the probability that $1.5$ $K$-separates the test nonconformity score.
The two addends in the second line of \eqref{eq:2-case1}
are the probabilities of the two disjoint possibilities
for $0.5$ $K$-separating the test nonconformity score
while the conformal p-value is $(K+1)/(m+1)$:
the first possibility is where $\alpha_{n+1}=1$,
and the second (typically unlikely) possibility
is where $\alpha_{n+1}=2$
(in which case there are no $\alpha_i=1$ among $\alpha_{l+1},\dots,\alpha_{n+1}$).

\subsection{Proof sketch of Proposition~\ref{prop:SIRP^L}}
\label{app:discrete}

Let us first check \eqref{eq:SIRP^L}.
Each addend in the sum over $k$ in \eqref{eq:SIRP^L}
is the probability of the event that the conformal p-value is $(k+1)/(m+1)$
(with the same interpretation of $p_0,p_1,\dots$ as in previous sections).
For a given $k$,
each addend in the sum over $J$ in \eqref{eq:SIRP^L}
is the probability that the conformal p-value is $(k+1)/(m+1)$,
the test nonconformity score is $\alpha_{n+1}=J$,
$m-k$ calibration nonconformity scores at specified positions are less than $J$,
and the remaining $k$ calibration nonconformity scores are $J$ or more.

The proof of \eqref{eq:F} will use the following complementary statement.

\begingroup
\renewcommand{\thetheorem}{\ref{prop:SIRP^L}$^\prime$}
\begin{proposition}\label{prop:SIRP^L-plus}
  For arbitrary but fixed $L$ and $K$ and for $m\to\infty$,
  an optimal vector $p$ in \eqref{eq:SIRP^L} satisfies
  \[
    p_j\sim\frac{c_j}{m},
    \enspace
    j=1,\dots,L,
    \quad
    p_0
    =
    1
    -
    \frac{\sum_{j=1}^L c_j}{m}
    +
    o(1/m),
  \]
  where an optimal $c=(c_j)_{j=1}^L\in[0,\infty)^L$ delivers a solution
  to the optimization problem \eqref{eq:F}.
\end{proposition}
\endgroup

To establish \eqref{eq:F} and Proposition~\ref{prop:SIRP^L-plus},
first of all notice that the $\max$ in \eqref{eq:F} is attained:
if any of $c_j$ tends to infinity,
the objective function will tend to $0$,
so that we are effectively maximizing over a compact set.
Now define new variables $c_j$, $j=1,\dots,L$, by $p_j=c_j/m$
as suggested by Proposition~\ref{prop:SIRP^L-plus}.
Plugging this into the expression following the $\max$ in \eqref{eq:SIRP^L}
and assuming $p_0=1-\sum_{j=1}^L p_j\to1$,
we obtain
\begin{multline}\label{eq:chain}
  \sum_{k=0}^K
  \binom{m}{k}
  \sum_{J=1}^L
  \left(
    \sum_{j=0}^{J-1}
    p_j
  \right)^{m-k}
  p_J
  \left(
    \sum_{j=J}^L
    p_j
  \right)^k
  \NarrowLineBreak
  \sim
  \sum_{J=1}^L
  p_J
  \sum_{k=0}^K
  \frac{m^k}{k!}
  \left(
    1
    -
    \sum_{j=J}^L
    p_j
  \right)^m
  \left(
    \sum_{j=J}^L
    p_j
  \right)^k\\
  \le
  \sum_{J=1}^L
  \frac{c_J}{m}
  \sum_{k=0}^K
  \frac{1}{k!}
  \exp
  \left(
    -\sum_{j=J}^L c_j
  \right)
  \left(
    \sum_{j=J}^L
    c_j
  \right)^k.
\end{multline}
The asymptotic equivalence ``$\sim$'' holds uniformly over all $p$
for which the expression to its left is $m^{-2}$ or more:
indeed, the addends in the sum $\sum_{k,J}$ to the left of ``$\sim$'' for which
\[
  \sum_{j=0}^{J-1}
  p_j
  \le
  1-m^{-1/2}
\]
are negligible since
\[
  \left(
    1 - m^{-1/2}
  \right)^m
  \le
  \exp
  \left(
    -m^{1/2}
  \right)
\]
shrinks to 0 super-polynomially fast as $m\to\infty$,
and
\[
  \left(
    \sum_{j=0}^{J-1}
    p_j
  \right)^{m-k}
  \sim
  \left(
    \sum_{j=0}^{J-1}
    p_j
  \right)^m
  =
  \left(
    1
    -
    \sum_{j=J}^{L}
    p_j
  \right)^m
\]
for the other addends.
Because of the inequality ``$\le$'' in the chain \eqref{eq:chain},
we can assume that $(c_j)$ range over a compact set,
and then the ``$\le$'' can be replaced by ``$\sim$''.
It remains to compare the last expression with \eqref{eq:SIRP^L}.
\end{document}